\documentclass{article}

\usepackage[final]{corl_2025} 


\usepackage{graphicx}
\usepackage{epsfig} 
\usepackage{mathptmx}
\usepackage{amsmath} 
\usepackage{wrapfig}

\usepackage{amssymb}  
\usepackage{amsthm}
\usepackage{wasysym}
\usepackage{mathrsfs}
\usepackage{cancel}
\usepackage[mathcal]{euscript}
\usepackage{bbm}
\usepackage{color}
\usepackage{lipsum}
\usepackage{algorithm}
\usepackage{algpseudocode}
\usepackage[normalem]{ulem}
\usepackage{tabularx}
\usepackage{booktabs}
\usepackage{array}

\usepackage{multirow}

\usepackage{subcaption}
\usepackage{graphicx}

\usepackage{xfrac}

\newcolumntype{Y}{>{\centering\arraybackslash}X}
\DeclareMathOperator*{\argmin}{arg\,min}

\usepackage{outlines}

\usepackage{enumitem}
\setenumerate[2]{label=\alph*.}
\setenumerate[3]{label=\roman*.}

\makeatletter
\patchcmd{\@makecaption}
  {\scshape}
  {}
  {}
  {}
\makeatother

\usepackage[capitalise]{cleveref}
\crefname{equation}{Eq.}{Eqs.}
\crefname{figure}{Fig.}{Figs.}
\crefname{tabular}{Tab.}{Tabs.}
\crefname{section}{Sec.}{Secs.}
\crefname{algorithm}{Alg.}{Algs.}
\crefname{appendix}{Appx.}{Appxs.}
\crefname{thm}{Thm.}{Thms.}
\crefname{cor}{Cor.}{Cors.}
\crefname{algorithm}{Alg.}{Algs.}
\crefname{lemma}{Lem.}{Lems.}
\crefname{problem}{Prob.}{Probs.}



\newcommand{\purple}[1]{{{\textcolor{purple}{#1}}}}
\definecolor{uncertifiedgreen}{rgb}{0.1,0.8,0.4}
\definecolor{purple}{rgb}{0.62, 0.12, 0.9}
\newcommand{\uncertified}[1]{{{\textcolor{uncertifiedgreen}{#1}}}}

\newtheorem{defn}{Definition}

\newtheorem{thm}[defn]{Theorem}
\newtheorem{cor}[defn]{Corollary}
\newtheorem{problem}[defn]{Problem}

\providecommand{\R}{\ensuremath \mathbb{R}}
\newcommand{\indicator}{\ensuremath \mathbf{1}}


\newcommand{\vc}[1]{#1}
\newcommand{\set}[1]{\mathcal{#1}}


\newcommand{\zeros}{\mathbf{0}}

\newcommand{\eye}{\vc{I}}


\newcommand{\inv}{^{-1}}

\newcommand{\expectation}{\mathbb{E}}




\newcommand{\sample}{\vc{x}}
\newcommand{\optvar}{\vc{k}}
\newcommand{\jointsample}{\vc{y}}
\newcommand{\state}{\vc{s}}
\newcommand{\action}{\vc{a}}

\newcommand{\observation}{\vc{o}}

\newcommand{\param}{\vc{k}}


\newcommand{\paramspace}{\set{K}}

\newcommand{\samplespace}{\set{X}}


\newcommand{\interactionfunc}{\vc{V}}







\newcommand{\alphasample}{\alpha^{\sample}}
\newcommand{\alphaoptvar}{\alpha^{\optvar}}
\newcommand{\alphajoint}{\alpha^{\jointsample}}

\newcommand{\prob}{\vc{p}}
\newcommand{\normal}{\mathcal{N}}

\newcommand{\objective}{J}

\newcommand{\ineqconstraint}{g}

\newcommand{\ours}{Ours}

\newcommand{\colvec}[2]{\begin{bmatrix} #1 \\ #2 \end{bmatrix}}

\newcommand{\nocontentsline}[3]{}
\newcommand{\tocless}[2]{\bgroup\let\addcontentsline=\nocontentsline#1{#2}\egroup}

\setcounter{tocdepth}{2}


\title{Joint Model-based Model-free Diffusion \\ 
for Planning with Constraints}

%


\author{
\bf{Wonsuhk Jung$^{*}$, Utkarsh A. Mishra$^{*}$, Nadun Ranawaka Arachchige,} \\
\bf{Yongxin Chen$^{\dagger}$, Danfei Xu$^{\dagger}$, Shreyas Kousik$^{\dagger}$} \\
Georgia Institute of Technology, Atlanta, GA, USA \\
$^{*}$ indicates equal contribution, $^{\dagger}$ indicates equal advising \\
\texttt{wonsuhk.jung@gatech.edu}
}

\begin{document}
\maketitle


\begin{abstract}
Model-free diffusion planners have shown great promise for robot motion planning, but practical robotic systems often require combining them with model-based optimization modules to enforce constraints, such as safety. 
Na\"ively integrating these modules presents compatibility challenges when diffusion's multi-modal outputs behave adversarially to optimization-based modules.
To address this, we introduce Joint Model-based Model-free Diffusion (JM2D), a novel generative modeling framework. 
JM2D formulates module integration as a joint sampling problem to maximize compatibility via an interaction potential, without additional training.
Using importance sampling, JM2D guides modules outputs based only on evaluations of the interaction potential, thus handling non-differentiable objectives commonly arising from non-convex optimization modules.
We evaluate JM2D via application to aligning diffusion planners with safety modules on offline RL and robot manipulation.
JM2D significantly improves task performance compared to conventional safety filters without sacrificing safety.
Further, we show that conditional generation is a special case of JM2D and elucidate key design choices by comparing with SOTA gradient-based and projection-based diffusion planners.
More details at: \url{https://jm2d-corl25.github.io/}
\end{abstract}


\keywords{Diffusion Models, Safety, Constrained Generation}


\vspace{-0.5em}
\section{Introduction}
\vspace{-0.5em}

Diffusion models have advanced generative modeling of diverse, high-quality samples in challenging domains, such as images, audio, and sequential data~\cite{song2020score, ho2020denoising}.
Encouraged by these successes, robotics research has increasingly leveraged diffusion for planning, control, and imitation learning~\cite{chi2023diffusion, janner2022planning, carvalho2024motion, pan2024model}, exploiting its ability to capture multimodal trajectories in a model-free way. 

Nevertheless, diffusion alone often falls short when robots need strict stability and safety guarantees--constraints more amenable to model-based optimization modules, such as trajectory optimizers, feedback controllers, and safety filters.
Integrating model-free diffusion with model-based optimization is challenging, because these modules often struggle with alignment.
Here, \textit{alignment} for modules sharing an output space (e.g., robot trajectories) means minimizing the difference between their outputs.
Alignment is particularly difficult for optimization-based safety-filters~\cite{tearle2021predictive, hsu2023safety, wabersich2023data, jung2024rail}, where diffusion to complete tasks can produce unsafe trajectories, while enforcing safety can degrade performance by negating generative diversity.

Alignment typically follows two strategies. 
The \textit{sequential} approach minimally corrects diffusion outputs via model-based postprocessing \cite{tearle2021predictive, hsu2023safety, jung2024rail, wabersich2021predictive, power2023sampling}, often causing module contradiction or impedance. 
Alternatively, \textit{embedding} incorporates constraints into diffusion denoising \cite{carvalho2024motion, mizuta2024cobl, kondo2024cgd, romer2024diffusion, xiao2023safediffuser}, but can sacrifice hard guarantees or apply only to simple cases. 
Both strategies suffer from being \textit{one-directional} (lacking reciprocal information sharing) and relying solely on diffusion for task objectives, such as task completion. 
Thus, we address the key challenge of how to simultaneously align model-free diffusion, model-based optimization, \textit{and} an external objective, achieving what we call \textit{mutual compatibility}.

To address the mutual compatibility challenge, our key insight is to first pose it as a \textit{Joint Model-Free Model-Based Generation} (JM2G) problem of explicitly sampling simultaneously from model-free and model-based parameter spaces.
To solve the JM2G problem, we take advantage of diffusion models as a generative framework and develop a \textit{Joint Model-based Model-free Diffusion} (JM2D).
To enable JM2D, we construct a joint distribution over the diffusion prior and the optimization prior, each importance weighted by an \textit{interaction potential} that quantifies mutual compatibility, even in the absence of paired training data.
As shown in \Cref{fig:teaser}, JM2D enables simultaneously diffusing over model-based and model-free modules while aligning both with each other and an external objective.
Altogether, JM2D resolves the core mutual compatibility challenge by yielding trajectories that satisfy arbitrary, possibly non‐differentiable, safety constraints by construction while preserving generative diversity.

\textbf{Contributions and Paper Organization.}
We pose the JM2G problem to enable mutual compatibility between model-free diffusion modules, model-based optimization modules, and external objectives, introducing a novel interaction potential to relate these components (\Cref{sec:problem}). To solve this, we propose the JM2D framework (\Cref{sec:method}). JM2D is based on a theoretical joint score function for simultaneous diffusion; critically, we derive a Monte Carlo score approximation that enables practical implementation even when model-based gradients are unavailable (a common scenario) and empirically improves diffusion output quality. Finally, we provide extensive experimental evaluations on offline RL benchmarks and robotic manipulation tasks (\Cref{sec:experiments}), demonstrating that, compared to SOTA methods with sequential sampling alignment, our joint sampling method achieves significantly improved task performance without sacrificing the safety constraint.

\begin{figure}
    \centering
    \includegraphics[width=0.80\textwidth]{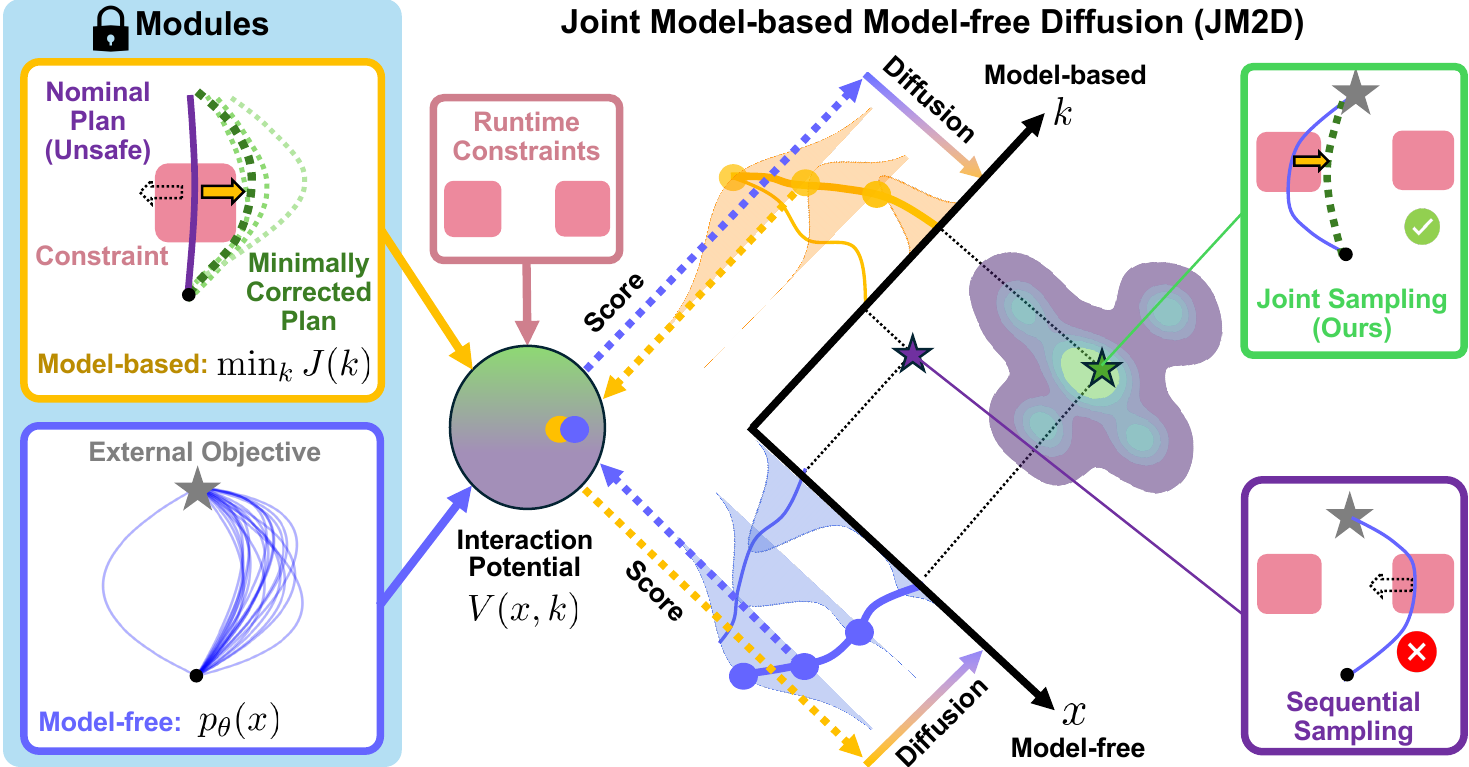}
    \caption{
    \textbf{Framework Overview.}
    We consider robotic systems composed of a model-free diffusion planner and a model-based optimization module. In this example, the diffusion policy is trained on right-skewed goal-reaching behaviors, while the model-based module can only correct plans to the right. The key challenge is to jointly sample a plan and its correction such that both modules are aware of each other’s capabilities. We propose Joint Model-based Model-free Diffusion (JM2D), which defines an interaction potential combining the diffusion model, optimization module, and constraints. Starting from joint noise, our method denoises both components together, yielding mutually compatible samples (\uncertified{green} box). In contrast, conventional sequential sampling fails to account for the optimization module's limitations, resulting in misaligned outputs (\purple{purple} box).
    } \label{fig:teaser}
    \vspace{-1em}
\end{figure}

\vspace{-0.5em}
\section{Related Work}
\vspace{-0.5em}


\textbf{Diffusion models and motion planning.} 
Motion planning seeks trajectories that drive a robot from a start to a goal state while respecting kinodynamic and environmental constraints. Traditional planners—such as gradient-based optimizers~\cite{ratliff2009chomp} and sampling-based methods~\cite{williams2018information}—perform well in structured settings but falter in highly nonconvex spaces or under discontinuous dynamics. Recently, diffusion models~\cite{sohl2015deep, song2020score, ho2020denoising}, which learn to sample from complex, multimodal distributions, have been applied to 
both data-driven~\cite{chi2023diffusion, ajay2022conditional, janner2022planning, chen2024simple} and model-based~\cite{pan2024model, xue2024full} motion planning. However, the stochastic sampling process complicates the satisfaction of explicit safety and feasibility constraints, limiting its deployment in safety-critical robotic domains.



\textbf{Diffusion planning with inference-time constraints.} Existing diffusion-based planners enforce constraints through three main paradigms.
(i) \textit{Post-hoc filtering} first generates diverse trajectories, then applies constrained optimization~\cite{power2023sampling, li2024constraint} or safety filters~\cite{hsu2023safety, wabersich2023data}—drawing on set‐invariance theory~\cite{ames2019control}, reachability analysis~\cite{bansal2017hamilton, jung2024rail}, or predictive control~\cite{tearle2021predictive}—but decoupled overrides can induce out‐of‐distribution actions~\cite{reichlin2022back} and efficiency losses~\cite{jung2024rail}.
(ii)\textit{ Soft-constraint guidance} uses classifier-based or classifier‐free methods~\cite{dhariwal2021diffusion, botteghi2023trajectory} and training‐free gradient steering~\cite{ye2024tfg, carvalho2024motion, mizuta2024cobl, kondo2024cgd}, yet demands differentiable losses and lacks strict feasibility guarantees.
(iii) \textit{Hard-constraint guidance} embeds convex feasibility via mirror maps or reflected SDEs~\cite{liu2023mirror, lou2023reflected, fishman2023metropolis} or projection‐based corrections~\cite{romer2024diffusion, bouvier2025ddat, liang2025simultaneous, christopher2025constrained}, but is limited to convex domains and can be computationally prohibitive. 
Other efforts such as~\cite{giannone2023aligning} integrate optimization during training under same-modality assumptions.
Our approach generalizes beyond this by jointly sampling across modalities—plan and optimization output—via a compatibility-aware diffusion process that satisfies hard constraints without performance loss, addressing prior limitations.

\vspace{-0.5em}
\section{Problem Statement and Preliminaries} 
\label{sec:problem}
\vspace{-0.5em}

This section introduces the general framework of \emph{Joint Model-free Model-based Generation (JM2G)}, where we consider the problem of sampling from a joint distribution over a model-based and a model-free variable, given only their marginals and an interaction potential quantifying their compatibility.
This formulation naturally arises in robotics, where subsystems such as planners, controllers, and safety filters are developed independently and only integrated at deployment: diffusion planners generate actions~\cite{chi2023diffusion, janner2022planning, ajay2022conditional}, and model-based optimization modules enforce constraints~\cite{bansal2017hamilton, kousik2020bridging, wabersich2021predictive}. This creates a joint sampling problem requiring simultaneous generation of actions and corresponding responses.
We first define the general version of the JM2G problem:

\begin{problem}\label{prob:joint_generation}
    \textbf{(Joint Model-free Model-based Generation)}
    Given a model-free generative distribution $\prob_\theta(\sample)$ over sample $\sample \in \samplespace \subset \R^{n_{\sample}}$, a model-based prior $\prob(\param)$ over sample $\param \in \paramspace \subset \R^{n_{\param}}$ and an interaction potential $\interactionfunc(\sample,\param): \samplespace \times \paramspace: \rightarrow \R_{\geq0}$ quantifying the compatibility of $(x,k)$ pairs,  we aim to efficiently sample from the joint distribution:
        $\prob(\sample, \param) \propto \prob_{\theta}(\sample)\prob(\param)\interactionfunc(\sample, \param).$
\end{problem}

While we learn the parameterized distribution $\prob_{\theta}(\sample)$ using a diffusion model, we assume $\prob(\param)$ and $\interactionfunc(\sample, \param)$ are only available at inference time. This reflects modular deployment scenarios where generative planners are pre-trained, and model-based optimization modules are introduced later.

In the context of robotics, we focus on a prevalent setting where the model-free component is a diffusion planner, and the model-based component is defined through \emph{constrained optimization} 
\footnote{
    Our framework naturally includes the setting where the model-based module is defined via an explicit form $\optvar = f(\sample)$ rather than an implicit form of optimization.
    We omit this extension for simplicity.
}.
Specifically:
(a) we use a diffusion planner to learn the distribution of plan $\sample \equiv [\sample_{t}, \dots, \sample_{t+H}]$ conditioned on the observation $\observation$, and
(b) an optimization module that generates a model-based output $\optvar$ (e.g., low-level control, safety correction) by solving
\begin{equation} \label{eq:optimization}
    \param^{*} = \argmin_{\param} \; \objective(\param \mid \sample) \quad \text{s.t.} \quad \ineqconstraint(\param \mid \sample) \leq 0
\end{equation}
\noindent where $\objective: \samplespace \times \paramspace \rightarrow \R$, $\ineqconstraint: \samplespace \times \paramspace \rightarrow \R$ respectively denotes the objective and constraints.
We aim to solve \Cref{prob:joint_generation} with the interaction potential defined as
    $V(\sample, \param) =
        \exp\left(-\lambda\inv\objective(\param \mid \sample)\right) \cdot 
            \indicator\left(\ineqconstraint(\param \mid \sample) \leq 0\right)$,
where the temperature $\lambda \in \R^{+}$ controls the optimality and $\indicator(\cdot)$ is an indicator function. We provide a toy domain planning example in \Cref{fig:donut2d_joint}.

\noindent

\textbf{Diffusion Models.}
We consider diffusion models~\cite{song2020score, ho2020denoising, song2020denoising} as a generative framework. 
Given a data distribution $\prob(\sample_0)$, the forward noising process perturbs clean samples $\sample_0$ into noisy samples $\sample_i$: $
\prob_{\alpha_i}(\sample_i \mid \sample_0) = \mathcal{N}(\sample_i; \sqrt{\alpha_i} \sample_0, (1-\alpha_i) I),
$ following a noise schedule $\alpha_i \in (0,1]$.
A reverse process denoises $\sample_i$ to $\sample_{i-1}$ using DDIM \cite{song2020denoising}: 
\begin{equation} \small \label{eq:reverse_denoise}
\prob_{\theta}(\sample_{i-1} \mid \sample_i) = \mathcal{N}\left(
\sample_{i-1}; 
\sqrt{\alpha_{i-1}} \left( \frac{\sample_i + (1-\alpha_i) s_\theta(\sample_i, i)}{\sqrt{\alpha_i}} \right) - \sqrt{1-\alpha_{i-1}-\sigma_i^2} \sqrt{1-\alpha_i} s_\theta(\sample_i, i),
\sigma_i^2 I
\right),
\end{equation}
where $s_\theta(\sample_i, i)$ predicts the \emph{score function} $\nabla_{\sample_i} \log \prob_i(\sample_i)$, the gradient of the log-density of noisy samples. %
The model is trained by minimizing the denoising score matching loss  \cite{song2020score} by regressing over the Tweedie score of the forward kernel \cite{luo2022understanding}:
$
s_{\alpha_i}(\sample_0 \mid \sample_i) = -\frac{\sample_i - \sqrt{\alpha_i} \sample_0}{1-\alpha_i}.$

In this work, we extend diffusion models to the \emph{joint space} $(\sample, \optvar)$, aiming to approximate the joint score $\nabla_{\sample, \optvar} \log p(\sample, \optvar)$ when only the model-free score $\nabla_{\sample} \log \prob_{\theta}(\sample)$ is available.

\vspace{-0.5em}
\section{Method}
\label{sec:method}
\vspace{-0.5em}

Na\"ively chaining diffusion planners and model-based optimizers often fails due to conflicting objectives. Our key insight is a unified view: simultaneously generating mutually-compatible outputs via JM2D (Joint Model-based Model-free Diffusion). This section develops JM2D, starting with a unified joint diffusion process encoding mutual compatibility via a shared score function (\Cref{sec:joint_vs_sequential}).
Since direct score computation is often intractable---due to non-differentiable objectives and reliance on clean samples---we derive a practical Monte Carlo approximation (\Cref{sec: monte carlo score estimator}).
This allows handling hard, non-differentiable constraints without privileged information such as gradient or compromising the data fidelity of the pre-trained planner.
Finally, we connect JM2D to conditional generation \cite{huang2024symbolic} and model-based diffusion \cite{pan2024model} as special cases (\Cref{sec: special cases conditional gen and MBD}).

\subsection{Joint Diffusion Process}
\label{sec:joint_vs_sequential}

\textbf{Challenge.}
A natural joint generation approach is sequential sampling: first drawing $\sample \sim \prob_{\theta}(\sample)$ from the model-free planner, then solving for $\param \sim \prob(\param \mid \sample)$ via model-based optimization.
Alternatively, one could alternate these steps using Gibbs sampling~\cite{gilks1995markov}. However, in robotics, planner outputs and optimization are often tightly coupled, such as when a planner proposes an unsafe trajectory, leaving no feasible corrective action \cite{hsu2023safety, jung2024rail, wabersich2021predictive}.
As a result, many $\sample$ admit no compatible $\param$, making $\prob(\param \mid \sample)$ ill-defined and causing sample waste.
More formally, when $\prob(\param \mid \sample)$ is sparsely supported in $\samplespace \times \paramspace$, joint sampling becomes essential to efficiently explore feasible regions.

\begin{figure}[b]
    \centering
    \includegraphics[width=0.80\linewidth]{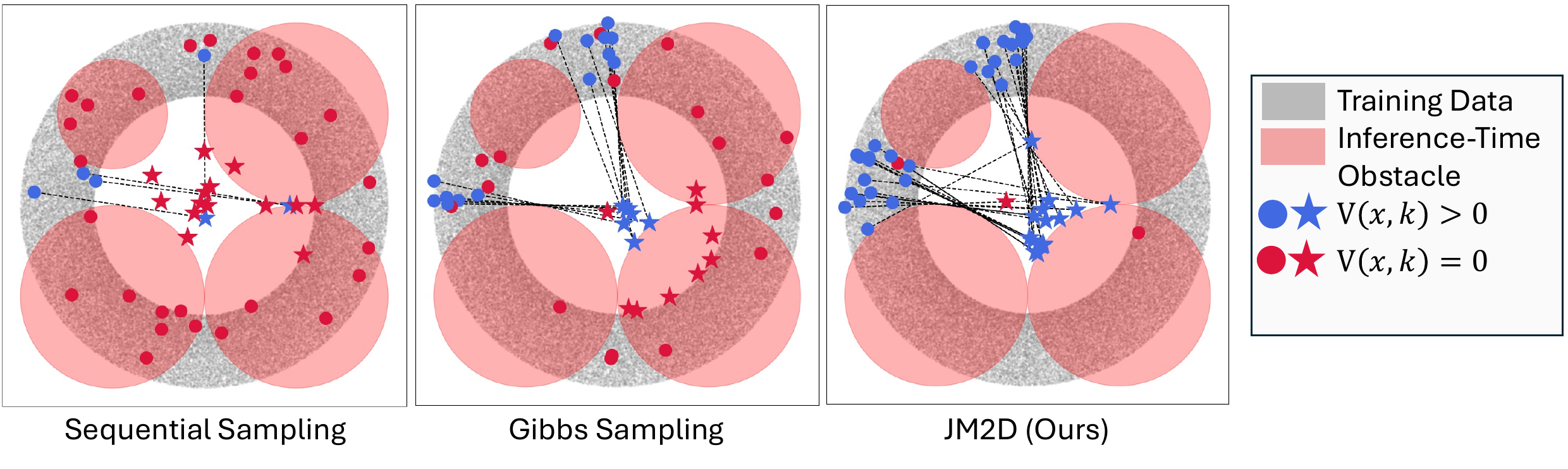}
    \caption{
    \textbf{Comparison of sampling methods on a toy domain.}
    A model-free planner $\prob_{\theta}(\sample)$ samples start-goal pairs in a donut-shaped region (gray), and a model-based optimization module finds the longest collision-free path connecting start and goal parameterized by $\optvar$ while avoiding inference-time obstacles (transparent red).
    We visualize 16 samples per sampling method. 
    Sequential sampling draws $\sample$ without considering $\optvar$, often yielding infeasible pairs (red).
    Gibbs sampling alternates between $\prob(\sample \mid \param)$ and $\prob(\param \mid \sample)$, improving feasibility but struggling when conditionals are ill-defined.
    Instead, JM2D (Ours) jointly explores $(x, k)$ via diffusion, globally searching the compatible joint sample, achieving the highest rates of mutually compatible samples (blue).
    }
    \label{fig:donut2d_joint}
    \vspace{-1em}
\end{figure}

\textbf{Design.} To address the challenge, we introduce a joint diffusion process over the concatenated variable $\jointsample =[\sample, \optvar]$. 
Specifically, we define a pseudo-forward diffusion:
\begin{equation} \small
\label{eq:joint_forward_noise}
    \prob_{\alphajoint_i}(\jointsample_i \mid \jointsample_0) = \mathcal{N}\left(\jointsample_i \mid \sqrt{\smash[b]{\alphajoint_i}} \jointsample_0, (1-\alphajoint_i) I\right),
\end{equation}
where $\alphajoint_i = [\alphasample_i; \alphaoptvar_i]$ preserves the pre-trained model-free schedule $\alphasample_i$ while letting $\alphaoptvar_i$ explore the optimization variable $\optvar$.
For sampling, starting from pure noise, we iteratively apply the reverse update \eqref{eq:reverse_denoise} to recover a compatible joint sample.
To perform this reverse process, we require the \emph{joint score}
$\nabla_{\jointsample_i} \log \prob_i(\jointsample_i)$, where 
$\prob_i(\jointsample_i) = \int \prob_{\alphajoint_i}(\jointsample_i \mid \jointsample_0) \prob_0(\jointsample_0) d\jointsample_0$.
This score guides $\jointsample_{i}$ toward higher compatibility under the interaction potential without excessively perturbing the pre-trained model-free score.
Hence, our joint diffusion framework can \emph{globally search} over $\jointsample$ while \emph{locally refining} according to mutual compatibility.
\Cref{fig:donut2d_joint} shows how joint diffusion outperforms sequential sampling in capturing compatible joint solutions.
\subsection{Monte-Carlo Joint Score Approximation} \label{sec: monte carlo score estimator}
\textbf{Challenge.} 
We now efficiently estimate the joint score $\nabla_{\jointsample_{i}} \log\prob_{i}(\jointsample_{i})$ to implement the joint diffusion scheme from \Cref{sec:joint_vs_sequential}.
This approximation is challenging in robotics for three reasons:
\emph{(a) Non-differentiability}: the interaction potentials often encode hard constraints or nonconvex optimizations, so gradients are unavailable.
\emph{(b) Clean sample dependence}: compatibility can be only evaluated on clean (fully denoised) samples, preventing naive noisy levels score composition.
\emph{(c) Data-fidelity}: the planner is pre-trained; overly aggressive corrections can destroy its learned distribution \cite{wang2024inference}, so we must minimally perturb the model-free score $\nabla_{\sample_i}\log\prob(\sample_i)$.

\textbf{Design.} 
To address these, we derive a gradient-free Monte Carlo (MC) estimator that approximates the joint score via importance sampling, which requires only \emph{evaluation of the interaction potential at clean joint samples} to guide the generation along the model-free reverse diffusion process. 

We derive our estimator starting from
\begin{align} \small\label{eq:joint_score_general}
    \nabla_{\jointsample_{i}}\log\prob_{i}(\jointsample_{i})
    &= \frac{
        \int \nabla_{\jointsample_{i}} \log\prob_{\alphajoint_i}(\jointsample_{i} | \jointsample_{0})
        \prob_{\alphajoint_i}(\jointsample_{i} | \jointsample_{0})\prob_{0}(\jointsample_{0})
        d\jointsample_{0}
    }{
        \prob_{i}(\jointsample_{i})
    } 
    = \int s_{\alpha^{\jointsample}_{i}}(\jointsample_{0}|\jointsample_{i}) \prob_{0|i}(\jointsample_{0}|\jointsample_{i}) d\jointsample_{0}
\end{align} 
\noindent where $s_{\alpha^{\jointsample}_{i}}(\jointsample_{0}|\jointsample_{i})$ is the Tweedie score of the forward kernel \cite{luo2022understanding}.
See \Cref{app:extended_proof:score_lemma} for details.
Since the posterior $\prob_{0|i}(\jointsample_0 \mid \jointsample_i)$ is intractable, we instead approximate it using self-normalized importance sampling with a tractable proposal $q(\jointsample_0 \mid \jointsample_i)$, giving:
\begin{align} \small
\label{eq:joint_score_general_proposal}
    \nabla_{\jointsample_{i}}\log\prob_{i}(\jointsample_{i}) 
    = \frac{
        \int s_{\alpha^{\jointsample}_{i}}(\jointsample_{0}|\jointsample_{i}) \frac{\prob_{0|i}(\jointsample_{0}|\jointsample_{i})}{q(\jointsample_{0}|\jointsample_{i})} q(\jointsample_{0}|\jointsample_{i}) d\jointsample_{0}
    }{
        \int \frac{\prob_{0|i}(\jointsample_{0}|\jointsample_{i})}{q(\jointsample_{0}|\jointsample_{i})} q(\jointsample_{0}|\jointsample_{i})  d\jointsample_{0}
    } \approx \frac{ 
        \mathbb{E}_{\hat{\jointsample}_0 \sim q(\cdot \mid \jointsample_{i})}\left[s_{\alphajoint_{i}}(\hat{\jointsample}_0 \mid \jointsample_i) w(\hat{\jointsample}_0)\right]
    }{ 
        \mathbb{E}_{\hat{\jointsample}_0 \sim q(\cdot \mid \jointsample_{i})}\left[w(\hat{\jointsample}_0)\right]
    }
\end{align}
\noindent where $w(\hat{\jointsample}_0)$ are the importance weights.
We define the proposal as 
\begin{equation} \small\label{eq:proposal}
    q(\jointsample_0 \mid \jointsample_i) = \prob_{0|i}^{\sample}(\sample_0 \mid \sample_i) \cdot q^{\optvar}(\optvar_0 \mid \optvar_i),
    \quad q^{\optvar}(\optvar_0 \mid \optvar_i) \triangleq \normal\left(\optvar_0; \optvar_i(\alphaoptvar_i)^{-\sfrac{1}{2}}, \left(\tfrac{1}{\alphaoptvar_i} - 1\right) \eye \right),
\end{equation}
where $\prob^{\sample}_{0|i}(\sample_0 \mid \sample_i)$ is the model-free reverse diffusion. This leads to our main theorem:
\begin{thm} \label{thm:main}
\textbf{Joint Score Approximation using Importance Sampling.} The joint score can be expressed as an importance-weighted combination of the scores of individually estimated clean joint samples, where the weights are determined by the interaction potential:
\begin{equation} \small\label{eq:score_approximation_joint}
    \nabla_{\jointsample_{i}} \log\prob_{i}(\jointsample_{i}) = 
    \frac{
            \expectation_{\hat{\jointsample}_{0} \sim q(\cdot | \jointsample_{i})} 
            [s_{\alphajoint_{i}}(\hat{\jointsample}_0 \mid \jointsample_{i})
            \interactionfunc(\hat{\sample}_{0}, \hat{\optvar}_{0})\prob^{\optvar}_{0}(\hat{\optvar}_{0})]
        }{
            \expectation_{\hat{\jointsample}_{0} \sim q(\cdot | \jointsample_{i})}
            [\interactionfunc(\hat{\sample}_{0}, \hat{\optvar}_{0})\prob^{\optvar}_{0}(\hat{\optvar}_0)]
        }
\end{equation}
\noindent where: $\jointsample_{i}=[\sample_{i}; \optvar_{i}]$ denotes the joint noisy sample, $q(\cdot \mid \jointsample_{i})$ denotes \eqref{eq:joint_score_general_proposal}.
\begin{proof}
This follows from $w(\hat{\jointsample}_0) = \frac{\prob_{0|i}(\hat{\jointsample}_{0}|\jointsample_{i})}{q(\hat{\jointsample}_{0}|\jointsample_{i})} \propto \interactionfunc(\hat{\sample}_{0}, \hat{\optvar}_{0})\prob^{\optvar}_{0}(\hat{\optvar}_0)$.
See \Cref{app:extended_proof_main_thm} for the derivation.
\end{proof}
\end{thm}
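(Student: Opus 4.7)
The plan is to start from the already-derived self-normalized importance sampling identity \eqref{eq:joint_score_general_proposal} and simply show that under the structured proposal \eqref{eq:proposal}, the unnormalized importance weight reduces to $\interactionfunc(\hat{\sample}_{0}, \hat{\optvar}_{0})\prob_{0}^{\optvar}(\hat{\optvar}_{0})$ up to a multiplicative constant that is independent of $\hat{\jointsample}_{0}$ and therefore cancels in the self-normalized ratio. The core algebraic manipulation uses Bayes' rule on the true posterior $\prob_{0|i}(\hat{\jointsample}_{0}\mid\jointsample_{i})$, the factorization of the joint forward kernel into independent $\sample$- and $\optvar$-components induced by \eqref{eq:joint_forward_noise}, and the definition of the joint prior $\prob_{0}(\jointsample_{0}) \propto \prob_{\theta}(\sample_{0})\prob(\optvar_{0})\interactionfunc(\sample_{0},\optvar_{0})$ inherited from \Cref{prob:joint_generation}.

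First I would write the numerator of the posterior using Bayes' rule as
\begin{equation*}
\prob_{0|i}(\hat{\jointsample}_{0}\mid\jointsample_{i}) \;\propto\; \prob_{\alphajoint_{i}}(\jointsample_{i}\mid\hat{\jointsample}_{0})\,\prob_{0}(\hat{\jointsample}_{0}) \;=\; \prob_{\alphasample_{i}}(\sample_{i}\mid\hat{\sample}_{0})\,\prob_{\alphaoptvar_{i}}(\optvar_{i}\mid\hat{\optvar}_{0})\,\prob_{\theta}(\hat{\sample}_{0})\,\prob(\hat{\optvar}_{0})\,\interactionfunc(\hat{\sample}_{0},\hat{\optvar}_{0}),
\end{equation*}
where the first equality uses the independence of the $\sample$- and $\optvar$-noise in \eqref{eq:joint_forward_noise} and all proportionalities are in $\hat{\jointsample}_{0}$. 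Next I would expand the proposal \eqref{eq:proposal}: the model-free reverse factor satisfies $\prob^{\sample}_{0|i}(\hat{\sample}_{0}\mid\sample_{i})\propto \prob_{\alphasample_{i}}(\sample_{i}\mid\hat{\sample}_{0})\prob_{\theta}(\hat{\sample}_{0})$ by Bayes, while a direct Gaussian calculation on $\prob_{\alphaoptvar_{i}}(\optvar_{i}\mid\hat{\optvar}_{0})=\normal(\optvar_{i};\sqrt{\alphaoptvar_{i}}\hat{\optvar}_{0},(1-\alphaoptvar_{i})\eye)$, viewed as a function of $\hat{\optvar}_{0}$, shows that $q^{\optvar}(\hat{\optvar}_{0}\mid\optvar_{i})\propto \prob_{\alphaoptvar_{i}}(\optvar_{i}\mid\hat{\optvar}_{0})$, matching the flat-prior Tweedie inversion in \eqref{eq:proposal}. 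Dividing the posterior by the proposal then cancels exactly $\prob_{\alphasample_{i}}(\sample_{i}\mid\hat{\sample}_{0})\prob_{\theta}(\hat{\sample}_{0})$ from the $\sample$-side and $\prob_{\alphaoptvar_{i}}(\optvar_{i}\mid\hat{\optvar}_{0})$ from the $\optvar$-side, leaving $w(\hat{\jointsample}_{0})\propto \interactionfunc(\hat{\sample}_{0},\hat{\optvar}_{0})\,\prob(\hat{\optvar}_{0})$ as the hint indicates.

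Finally I would plug this weight into \eqref{eq:joint_score_general_proposal}. The unknown normalizing constants (namely $\prob_{i}(\jointsample_{i})$ and the constants hidden in the two $\propto$ symbols above) depend only on $\jointsample_{i}$, not on the integration variable $\hat{\jointsample}_{0}$, so they factor out of both the numerator and denominator expectations and cancel, yielding \eqref{eq:score_approximation_joint}. The main obstacle is bookkeeping: one must be careful that every proportionality step treats $\jointsample_{i}$ as fixed and $\hat{\jointsample}_{0}$ as the variable, and that the specific form of $q^{\optvar}$ in \eqref{eq:proposal} is precisely what makes the forward $\optvar$-kernel cancel without leaving a residual $\prob(\hat{\optvar}_{0})$ factor in the denominator---this is why $\prob(\hat{\optvar}_{0})$ survives in the final weight rather than being absorbed, which in turn is what allows the estimator to be evaluated using only clean-sample evaluations of $\interactionfunc$ and $\prob^{\optvar}_{0}$ without any gradient of either. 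The detailed constant tracking can be deferred to \Cref{app:extended_proof_main_thm}.
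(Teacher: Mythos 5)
Your proposal is correct and follows essentially the same route as the paper's proof in \Cref{app:extended_proof_main_thm}: apply Bayes' rule to the posterior, factorize the forward kernel and the joint prior $\prob_0(\jointsample_0)\propto\prob_0^{\sample}(\sample_0)\prob_0^{\optvar}(\optvar_0)\interactionfunc(\sample_0,\optvar_0)$, cancel against the factorized proposal, and let self-normalization absorb the $\jointsample_0$-independent constants. Your only addition is to spell out the Gaussian identity $q^{\optvar}(\hat{\optvar}_0\mid\optvar_i)\propto\prob_{\alphaoptvar_i}(\optvar_i\mid\hat{\optvar}_0)$ justifying why the $\optvar$-side forward kernel cancels; the paper silently folds that ratio into the constant $Z(\jointsample_i,i)$.
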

Having defined the proposal $q(\jointsample_0 \mid \jointsample_i)$, we estimate the joint score using MC samples and apply it in a reverse diffusion \eqref{eq:reverse_denoise}. 
See \Cref{algo:joint_sis} for details.
For extensions to settings with privileged information as gradients and projection operators, see \Cref{app:discussion}.

\begin{algorithm*}[ht]
\caption{Joint Model-free Model-based Diffusion (JM2D)}
\label{algo:joint_sis}
\begin{algorithmic}[1]
\Require Model-free diffusion $\prob_{\theta}(\sample)$, Model-based prior $\prob^{\optvar}_{0}(\optvar_0)$, Interaction potential $\interactionfunc(\sample, \optvar)$
\Require Joint forward noise schedule $\alpha^{\jointsample}_{i} = [\alpha^{\sample}_{i}, \alpha^{\optvar}_{i}]$, Stochasticity $\sigma_i$
\State Initialize joint noise: $\jointsample_{I} = [\sample_{I}; \optvar_{I}] \sim \normal(\zeros, \eye)$
\For{$i = I$ to $1$}
    \State Construct the Monte Carlo samples per  \eqref{eq:proposal}:
    \[
    \samplespace_{i}, \paramspace_{i} \leftarrow \textsc{SampleMC} (i, \sample_i, \optvar_i, \prob_{\theta}(\sample), \alpha_i^{\jointsample})
    \]
    \State Estimate the joint score per \eqref{eq:score_approximation_joint}:
    \[
    \nabla_{\jointsample_{i}} \log\prob_{i}(\jointsample_{i})
    =
    \colvec{
        \nabla_{\sample_{i}} \log\prob_{i}(\jointsample_{i})
    }{
        \nabla_{\optvar_{i}} \log\prob_{i}(\jointsample_{i})
    }
    \leftarrow
    \frac{
        \sum_{(\hat{\sample}_0, \hat{\optvar}_0) \in \samplespace_i \times \paramspace_i}
        \left[s_{\alphajoint_i}(\hat{\jointsample}_0 \mid \jointsample_i)
        \interactionfunc(\hat{\sample}_0, \hat{\optvar}_0)\prob^{\optvar}_{0}(\hat{\optvar}_0)\right]
    }{
        \sum_{(\hat{\sample}_0, \hat{\optvar}_0) \in \samplespace_i \times \paramspace_i}
        \left[ \interactionfunc(\hat{\sample}_0, \hat{\optvar}_0)\prob^{\optvar}_{0}(\hat{\optvar}_0) \right]
    }
    \]
    
    \State Denoise the joint samples per \eqref{eq:reverse_denoise} and Eq.~(6) in \cite{pan2024model}:
    \Statex
    \begin{align*}
    \jointsample_{i-1} \leftarrow\ &
    \sqrt{ \frac{\alpha_{i-1}^{\jointsample}}{\alpha_{i}^{\jointsample}} }
    \left(
    \jointsample_i + (1 - \alpha_i^{\jointsample}) \nabla_{\jointsample_{i}} \log \prob_{i}(\jointsample_{i})
    \right) \\
    &+
    \begin{bmatrix}
    \sqrt{1 - \alpha_{i-1}^{x} - \sigma_i^2} \sqrt{1 - \alpha_i^{x}} \nabla_{\sample_{i}} \log \prob_{i}(\jointsample_{i}) + \sigma_i z \\
    0
    \end{bmatrix},
    \quad z \sim \normal(0, \eye)
\end{align*}
\EndFor
\State \Return $\jointsample_0 = [\sample_0; \optvar_0]$
\end{algorithmic}
\end{algorithm*}

\begin{algorithm*}[h]
\caption{Monte Carlo Sample Construction (\textsc{SampleMC})}
\label{algo:mc_sampling}
\begin{algorithmic}[1]
\Require Denoising timestep $i$, Noisy model-free sample $\sample_i$, Noisy model-based sample $\optvar_i$
\Require Diffusion model $\prob_{\theta}$, Joint forward noise schedule $\alpha_{i}^{\jointsample}$
\State Sample $N$ clean model-free samples $\hat{\sample}_{0}^{(j)}$ from $\sample_i$ via reverse steps:
\For{$\tau = i$ to $1$}
    \State $\hat{\sample}_{\tau-1}^{(j)} \sim \prob_{\theta}(\sample_{\tau-1}^{(j)} \mid \hat{\sample}_{\tau}^{(j)})$, for $j=1 \cdots N$
\EndFor
\State Sample $N_K$ clean model-based parameters:
\[
\hat{\optvar}_0^{(l)} \sim q^{\optvar}(\optvar_0 \mid \optvar_i) = 
\normal\left(
    \optvar_0; \optvar_i (\alpha_i^{\optvar})^{-\sfrac{1}{2}}, \left(\tfrac{1}{\alpha_i^{\optvar}} - 1\right) \eye
\right), \quad l = 1 \cdots N_K
\]
\State \Return $\{ \hat{\sample}_{0}^{(j)} \}_{j=1}^{N}$, $\{ \hat{\optvar}_0^{(l)} \}_{l=1}^{N_K}$
\end{algorithmic}
\end{algorithm*}

\textbf{Remark.}
In the robotics context, by encoding mutual compatibility, the interaction potential encourages sampling feasible plans, but does not guarantee feasibility ($\ineqconstraint(\param \mid \sample) \leq 0$ per \eqref{eq:optimization}).
Therefore, in practice, we postprocess infeasible joint samples (i.e., $V(\sample_0, \optvar_0) = 0$) with a single model-based optimization step to guarantee feasibility.
Despite this, our approach still improves performance because the sample is already aligned with the optimization (see \Cref{sec:experiments}).

\subsection{Special Cases: Conditional Generation and Model-Based Diffusion}\label{sec: special cases conditional gen and MBD}

We conclude this section by showing that JM2D unifies Conditional Generation (CG) and Model-Based Diffusion (MBD) \cite{pan2024model} under a common framework.
This is crucial because these approaches represent widely-used paradigms in diffusion-based robotics, so unification supports direct benchmarking with CG-based methods and enables the reuse of practical heuristics from the MBD literature, such as proposal shaping and noise scheduling.

\begin{cor} \label{cor:special_case}
Under the interaction potential structure $\interactionfunc(\sample, \optvar) = \interactionfunc^{\sample}(\sample) \interactionfunc^{\optvar}(\optvar)$ and a uniform prior $\prob(\optvar)$, JM2D reduces to two independent sampling process:  
(1) conditional generation of $\sample$ guided by $\interactionfunc^{\sample}(\sample)$, and  
(2) model-based diffusion over $\optvar$ guided by $\interactionfunc^{\optvar}(\optvar)$.
\end{cor}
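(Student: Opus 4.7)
The plan is to start from the joint-score identity in \Cref{thm:main} and show that, under a product interaction potential and a uniform prior on $\optvar$, both the numerator and the denominator of \eqref{eq:score_approximation_joint} factorize across the $\sample$- and $\optvar$-blocks, so that the reverse dynamics for $\sample$ and for $\optvar$ decouple completely.

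First I would substitute $\interactionfunc(\hat{\sample}_0,\hat{\optvar}_0) = \interactionfunc^{\sample}(\hat{\sample}_0)\interactionfunc^{\optvar}(\hat{\optvar}_0)$ and treat $\prob^{\optvar}_0(\hat{\optvar}_0)$ as a constant by the uniform-prior assumption, so that $\prob^{\optvar}_0$ cancels between numerator and denominator. Next I would exploit two structural facts: (i) the proposal in \eqref{eq:proposal} is by construction a product $q(\hat{\jointsample}_0\mid \jointsample_i) = \prob^{\sample}_{0\mid i}(\hat{\sample}_0\mid \sample_i)\,q^{\optvar}(\hat{\optvar}_0\mid \optvar_i)$, and (ii) because the pseudo-forward kernel \eqref{eq:joint_forward_noise} is a product of independent Gaussians over the $\sample$- and $\optvar$-coordinates, the Tweedie score $s_{\alphajoint_i}(\hat{\jointsample}_0\mid \jointsample_i)$ is block-diagonal: its $\sample$-block equals $s_{\alphasample_i}(\hat{\sample}_0\mid \sample_i)$ and depends only on $\hat{\sample}_0$, and its $\optvar$-block equals $s_{\alphaoptvar_i}(\hat{\optvar}_0\mid \optvar_i)$ and depends only on $\hat{\optvar}_0$.

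Restricting attention to the $\sample$-block, both expectations in \eqref{eq:score_approximation_joint} become products of a $\hat{\sample}_0$-expectation against $\prob^{\sample}_{0\mid i}$ and a $\hat{\optvar}_0$-expectation against $q^{\optvar}$. The common factor $\expectation_{\hat{\optvar}_0 \sim q^{\optvar}}[\interactionfunc^{\optvar}(\hat{\optvar}_0)]$ cancels, leaving exactly the importance-sampled classifier-guidance score driven by $\interactionfunc^{\sample}$ alone over the model-free proposal $\prob^{\sample}_{0\mid i}$, which matches the conditional-generation update of \cite{huang2024symbolic}. A symmetric argument on the $\optvar$-block cancels the common $\interactionfunc^{\sample}$ factor and recovers the MBD update of \cite{pan2024model} over the Gaussian proposal $q^{\optvar}$. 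Since each block update depends only on its own coordinates, the reverse dynamics in \Cref{algo:joint_sis} split into two independent diffusions, proving the corollary.

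The main obstacle is making the "uniform prior" step rigorous when $\paramspace$ is unbounded, since the cancellation of $\prob^{\optvar}_0$ is only formal for an improper prior; this can be handled by either restricting to a bounded $\paramspace$ on which the uniform distribution is well-defined, or by interpreting the statement as a limit of proper priors that concentrate mass uniformly. The remaining steps---the block-diagonal structure of the Tweedie score and the factorization of the expectations---are direct algebraic consequences of the independent-Gaussian forward kernel and the product proposal, so they should go through without difficulty.
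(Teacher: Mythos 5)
Your proposal is correct and follows essentially the same route as the paper's proof: substitute the product interaction potential and factorized proposal into the expression from \Cref{thm:main}, absorb $\prob^{\optvar}_0$ as a constant under the uniform prior, and cancel the common $\optvar$- (resp.\ $\sample$-) expectations in each block to recover classifier-style conditional guidance for $\sample$ and the MBD score for $\optvar$. Your explicit remarks on the block-diagonal Tweedie score and the properness of the uniform prior are reasonable refinements that the paper leaves implicit, but they do not change the argument.
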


\begin{proof}
\vspace{-1em}
Follows directly from \Cref{thm:main}.
See \Cref{app:extended_proof_cor} for the proof.
\vspace{-1em}
\end{proof}
As joint sampling lacks baselines, we benchmark the CG segment against constraint-aligned planners to evaluate sampling efficiency in the absence of model-based modules (see \Cref{subsec: eval for cond gen in robotics}).
\vspace{-0.5em}
\section{Experiments: Joint Sampling}
\label{sec:experiments}
\vspace{-0.5em}

In this section, we assess the effectiveness of joint diffusion in handling complex, non-differentiable interaction potentials in a robotics safety-filtering task.

\subsection{JM2D for Robot Motion Planning and Safety Filtering}
\label{subsec: eval on joint sampling robot planning}

\textbf{Setup.}
We aim to understand key performance improvements with JM2D in a safety-filtering scenario. 
Thus, we hypothesize that JM2D improves both safety \textit{and} task completion over baselines.
We evaluate in 2D D4RL-PointMaze (\Cref{fig:d4rl_pointmaze}) environment, where the goal is to reach the target without colliding with the walls. The state and action spaces are $\state \in \mathbb{R}^4$, representing position and velocity, and $\action \in \mathbb{R}^2$, denoting 2D actuation forces.
We use a pre-trained Diffusion Policy~\cite{chi2023diffusion} as our model-free module and a reachability-based safety policy~\cite{kousik2020bridging,jung2024rail} as the model-based optimization module.

\begin{figure}[h]
    \centering
    \includegraphics[width=0.75\textwidth]{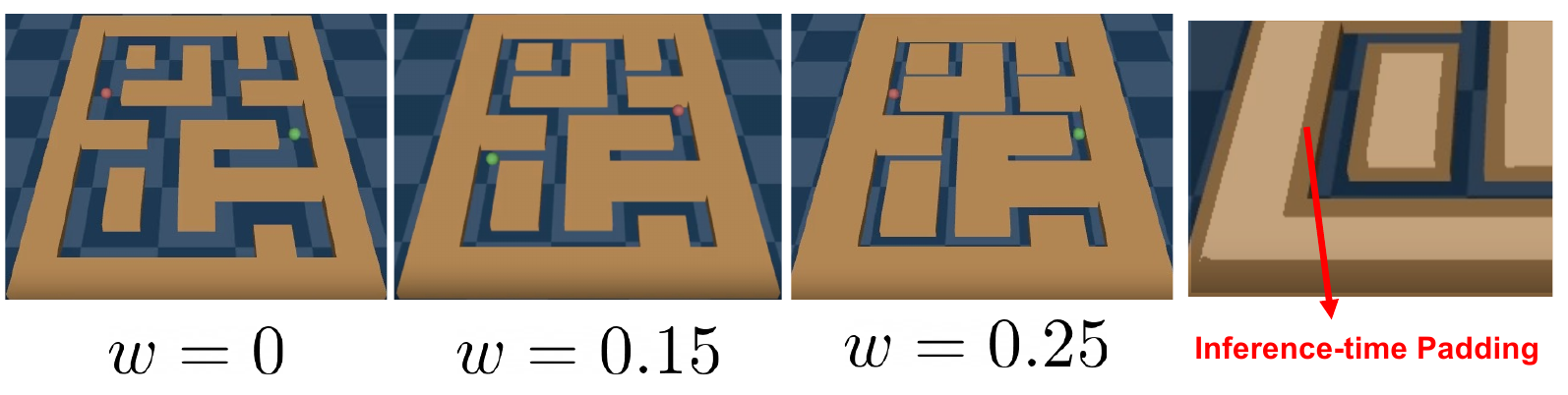}
    \caption{\textbf{Task description of D4RL-PointMaze \cite{fu2020d4rl}}.
    (a) The robot (red) must reach the goal (green) without colliding with the wall.
    (b) At inference time, the wall boundaries are inflated by a width $w$ to impose additional safety constraints. 
    The case $w = 0$ corresponds to the original wall thickness used during training.}
    \label{fig:d4rl_pointmaze}
\end{figure}

We use a pre-trained Diffusion Policy~\cite{chi2023diffusion} as our model-free module to generate sequences of actions $\sample = [\action_t, \dots, \action_{t+H}]$ over a fixed horizon of $H = 32$, resulting in $\sample \in \mathbb{R}^{64}$. As the safety policy, we use reachability-based safety policy (RTD)~\cite{kousik2020bridging,jung2024rail}, which computes a failsafe backup action $\optvar \in \mathbb{R}^2$ via a model-based optimization module. During evaluation, we introduce novel obstacles by inflating the wall boundaries, which were not present during training. Each policy is evaluated across 5 random seeds, with 100 simulations per seed.

We introduce inference-time constraints by padding the maze walls; the diffusion planner does not see padded walls during training.
We also perform an ablation to assess the tradeoff between computation time and accuracy by varying JM2D's number of Monte Carlo samples.
We compare to two baselines: (a) RAIL~\cite{jung2024rail}, which aligns a model-free diffusion policy~\cite{chi2023diffusion} with a reachability-based safety filter~\cite{kousik2020bridging} via sequential sampling; and (b) a Gibbs sampler that alternates between conditional distributions over model-free and model-based components. RAIL~\cite{jung2024rail} enforces the same safety filter as ours but performs filtering explicitly after sampling a diffusion plan. Such a method is highly susceptible to bad modes being sampled by the diffusion planner, resulting in no safe backup plans. Moreover, such methods lack feedback between the model-free planner and the model-based optimization module, which can result in conflicting actions and failure in completing the task objective.

We combine three metrics to assess mutual compatibility: \textbf{Safe Success Rate} (percentage of safe and successful trials), \textbf{Intervention Rate} (percentage of timesteps with safety filter intervention), and \textbf{Task Horizon} (duration to task completion).

\begin{figure}[h]
\vspace{-1em}
    \centering
    \includegraphics[width=\textwidth]{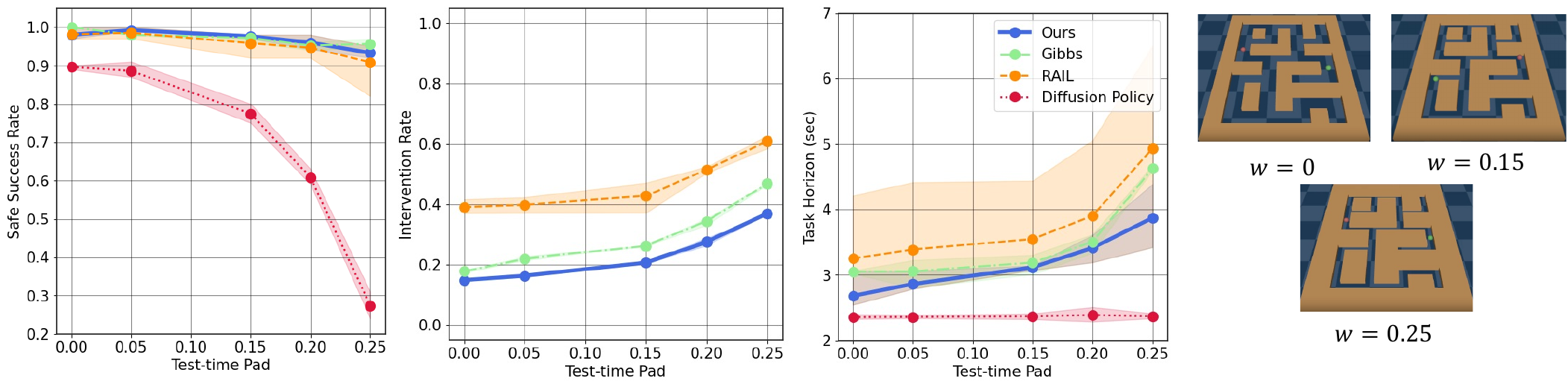}
    \caption{
    \textbf{Joint diffusion safety filtering increases performance without sacrificing safety.}
    A mobile robot (green dot) must navigate to a target (red dot) while avoiding collisions with walls.
    At test time, additional obstacles are introduced by padding walls with increasing thickness $w$.
    We evaluate performance across 80 trials of 5 random seeds using three metrics: \textit{safe success rate} (left), \textit{intervention rate} (middle), and \textit{task horizon} (right).
    As wall padding increases, without the safety intervention, the baseline diffusion policy's safe success rate degrades significantly. RAIL and Gibbs sampling strategy maintains safety but exhibits increasing intervention rates.
    In contrast, our method consistently exhibits high safe success, with significantly fewer interventions and shorter horizons—indicating improved safety-performance trade-offs.
    While RAIL, Gibbs and ours  all maintain strict safety guarantees, our joint sampling leads to more efficient task execution.
    }
    \label{fig:joint_maze_2d}
    \vspace{-1.5em}
\end{figure}

\textbf{Result: JM2D outperforms Sequential sampling and Gibbs sampling.} 
As shown in \Cref{fig:joint_maze_2d}, JM2D achieves significantly lower Intervention Rate and higher Safe Success Rate over RAIL, especially as conflicts between inference-time safety constraints and the training distribution increase (e.g., with greater wall padding, the safe success rate of the diffusion policy drops sharply). 
This improvement results from JM2D’s ability to jointly sample both the plan and its safety backup, maintaining compatibility throughout the diffusion process.
RAIL’s decoupled design lacks feedback, often producing incompatible samples that require external safety-filter overrides, leading to longer task horizons and reduced success.
While Gibbs sampling incorporates feedback, it still suffers from higher intervention rates and longer horizons, likely due to ill-posed intermediate steps that bias guidance and reduce compatibility.
None of these methods violate safety.


\begin{wrapfigure}{r}{0.4\linewidth}
    \centering
    \vspace{-15pt}
    \includegraphics[width=\linewidth]{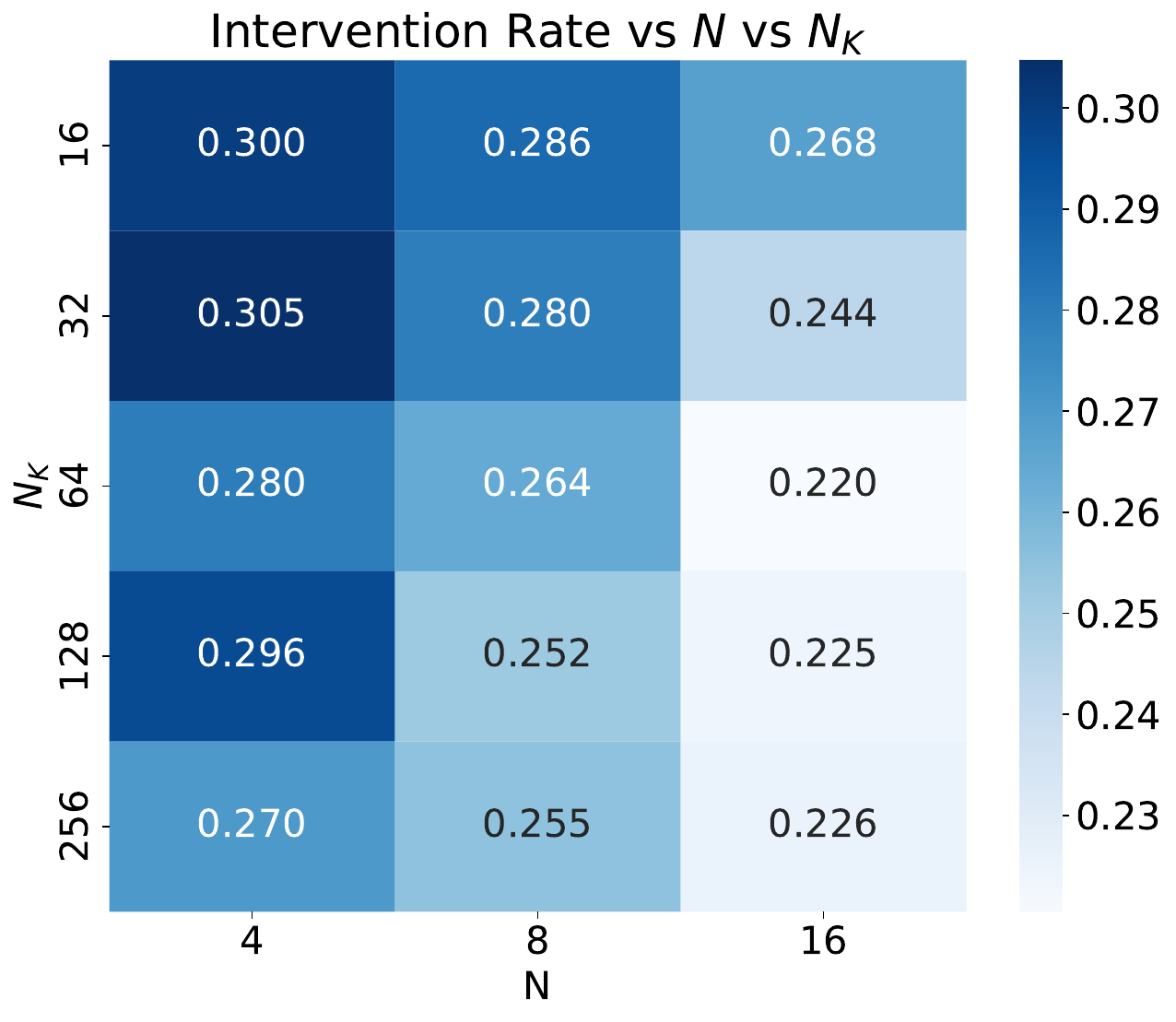}
    \vspace{-10pt}
    \caption{
    As the number of candidate samples ($N$ and $N_K$) increases, intervention of JM2D samples by the external safety filter decreases, implying higher CA.
    }
    \label{fig:jm2d_N_Nk}
    \vspace{-1em}
\end{wrapfigure}

\textbf{Result: Ablation on Monte Carlo Sample Size.} 
As shown in \Cref{fig:jm2d_N_Nk}, more samples reduce the Intervention Rate; this is due to exploring more denoising paths for $\sample$ and $\param$, enabling better mutual compatibility.
However, since this increases computation time, users must balance sample size with practical efficiency.

\textbf{Result: Ablation on Backup Planner Design Quality.}
Safety–filter frameworks must tolerate imperfect backup policies in real robots, so we evaluate \textsc{JM2D} under three backup policy variants:
(1) \emph{High quality}, which performs an accelerate--brake maneuver (accelerate for 0.5\,s, then decelerate to rest);
(2) \emph{x$+$y$+$}, which allows acceleration only in the positive \(x\) and \(y\) directions; and
(3) \emph{x$-$y$-$}, which allows acceleration only in the negative \(x\) and \(y\) directions.
For each variant we execute 50 roll‑outs with inference‑time wall padding \(w = 0.20\) and record the Safe Success Rate (SSucc.), Task Horizon (THor.), and Intervention Rate (Int\,\%), reported in Table~\ref{tab:app:robust}.
Constraining the backup planner sharply degrades \textsc{RAIL}:\;safe success falls from 0.90 to 0.60 while interventions climb from 0.52 to 0.72.
In contrast, \textsc{JM2D} remains resilient (SSucc.\,\(\ge 0.76\) across all variants) and roughly halves the intervention frequency, suggesting that its diffusion planner anticipates backup‑planner limitations and produces plans that remain compatible even with weakened safety capabilities.

\begin{table}[h] \small
    \centering
    \caption{Robustness of \textsc{JM2D} to backup–planner design qualities.}
    \begin{tabular}{|l|c|c|c|c|c|c|c|c|c|}
    \hline
       \multirow{2}{*}{Algorithm} & \multicolumn{3}{c|}{High quality} & \multicolumn{3}{c|}{Low quality ($x+y+$)} & \multicolumn{3}{c|}{Low quality ($x-y-$)}\\
       \cline{2-10} 
       & SSucc. & SHor. & Int\% 
       & SSucc. & SHor. & Int\%
       & SSucc. & SHor. & Int\%
       \\
       \hline 
       RAIL & 
       $0.90$ & $351.7$ & $0.52$ & 
       $0.62$ & $417.3$  & $0.61$ & 
       $0.60$ & $367.7$ & $0.72$ \\
       JM2D (Ours) & 
       $\textbf{0.94}$ & $\textbf{297.7}$ & $\textbf{0.24}$ &
       $\textbf{0.76}$ & $\textbf{259.7}$ & $\textbf{0.40}$ &
       $\textbf{0.88}$ & $\textbf{366.3}$ & $\textbf{0.47}$ \\
       \hline
    \end{tabular}
    \label{tab:app:robust}
    \vspace{-6pt}
\end{table}

\subsection{Real-world Evaluation}
\textbf{Setup} We implement JM2D on a Franka robot. For our task, we chose to pick up a mug while avoiding obstacles. These obstacles were not present in the training data for the diffusion policy. To train the model, we collect $100$ demonstrations of the task through teleoperation of the robot, grasping the mug in different locations like the rim and the handle. The input to the diffusion policy~(DP) is the SE(3) poses of the mug (acquired through ArUco markers \cite{garrido2014automatic}), and the proprioceptive information of the robot. The output of the model is the absolute end-effector pose (position and translation). During testing, we placed novel obstacles in the robot's workspace as shown in~\Cref{fig:real-robot}. The robot then has to complete the task without hitting any of the obstacles. 
\begin{figure}[h]
    \centering
    \includegraphics[width=0.5\linewidth]{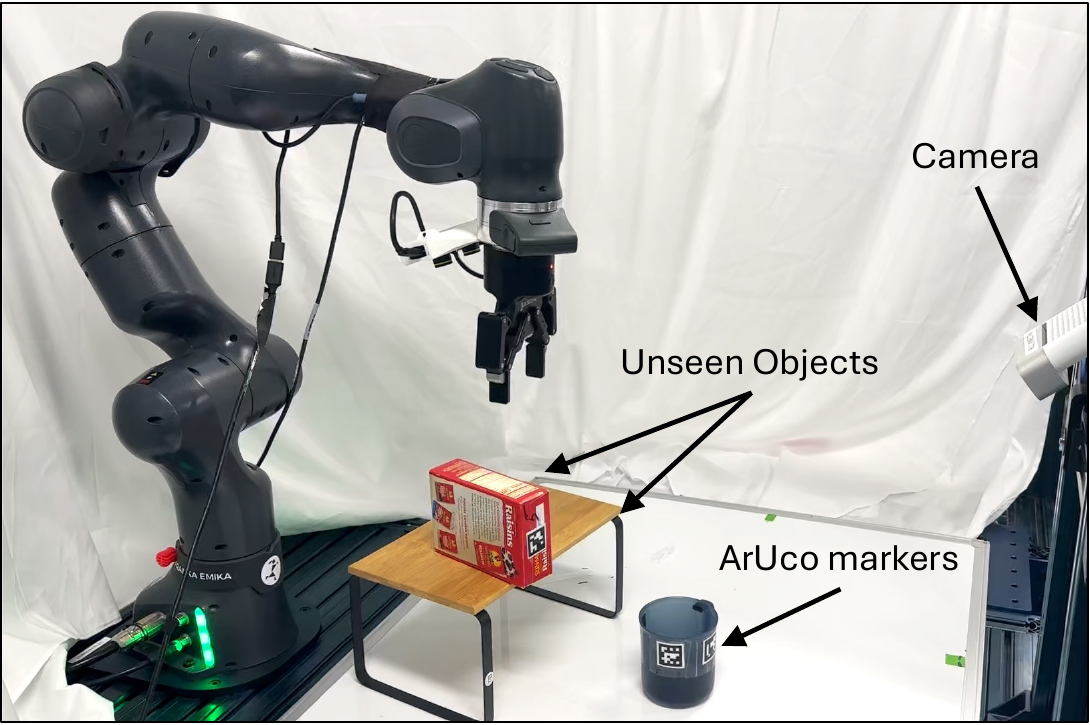}
    \caption{Real robot experiment setup}
    \label{fig:real-robot}
\end{figure}

We rollout the pre-trained vanilla DP in the cluttered scenario. Since there are obstacles in the scenario previously unseen by the DP, it collides with them and fails. This is expected as naive DP is uninformed action sampling. We show this in~\Cref{fig:real-rollout} (top). On the other hand, we deploy DP combined with a reachability-based safety filter deployed through our JM2D sampling framework. We see an improved safety guarantee as the sampled actions respond to the backup planner and follow a safe trajectory without compromising task success as illustrated in~\Cref{fig:real-rollout} (bottom).
\begin{figure}[h]
    \centering
    \includegraphics[width=\linewidth]{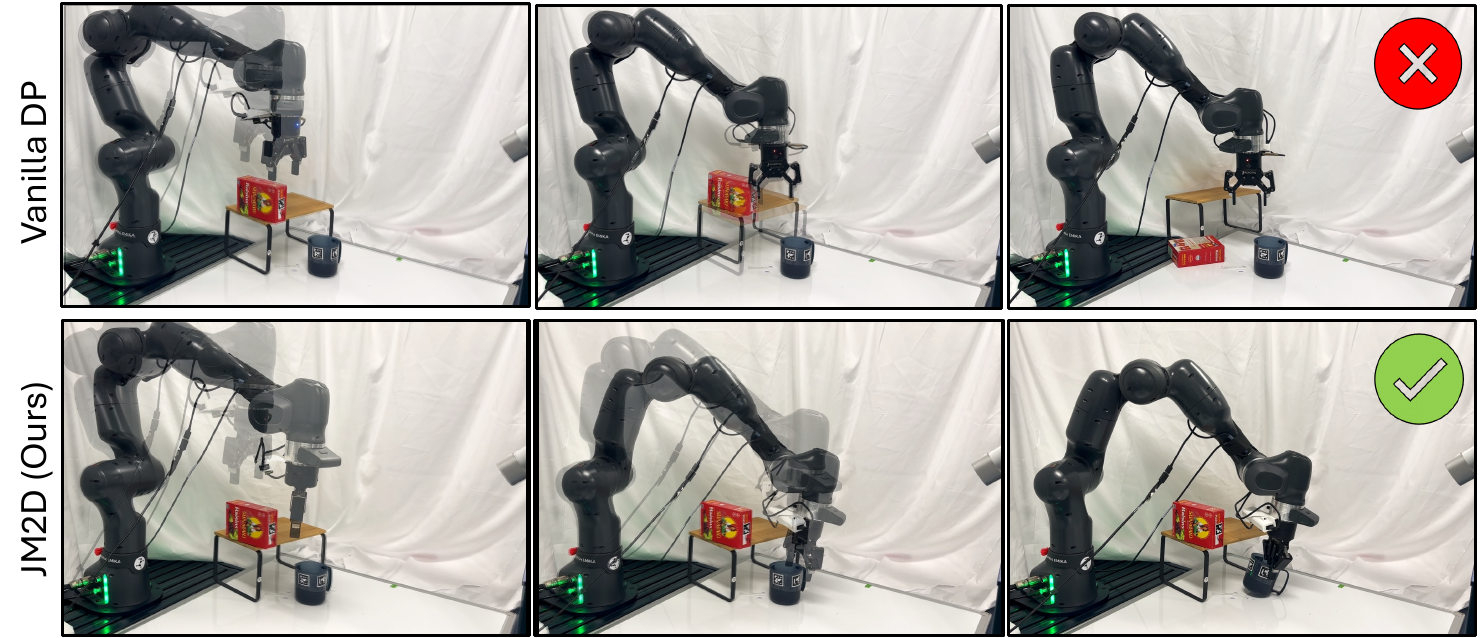}
    \caption{(Top) Na\"ive DP fails to execute the task of mug-pickup when it faces obstacles unseen in the training data. (Bottom) JM2D-based composition of DP and reachability-based safety filter successfully completes the task without compromising safety.}
    \label{fig:real-rollout}
\end{figure}

Additionally, investigating into the diffusion process also shows the multi-modality considered by our method before choosing the final trajectory. With our JM2D framework, we can guide the diffusion sampling process to choose the safest mode. We show it in~\Cref{fig:zono-rollout} and provide more implementation details in~\Cref{app:realworld}.

\begin{figure}[h]
    \centering
    \includegraphics[width=\linewidth]{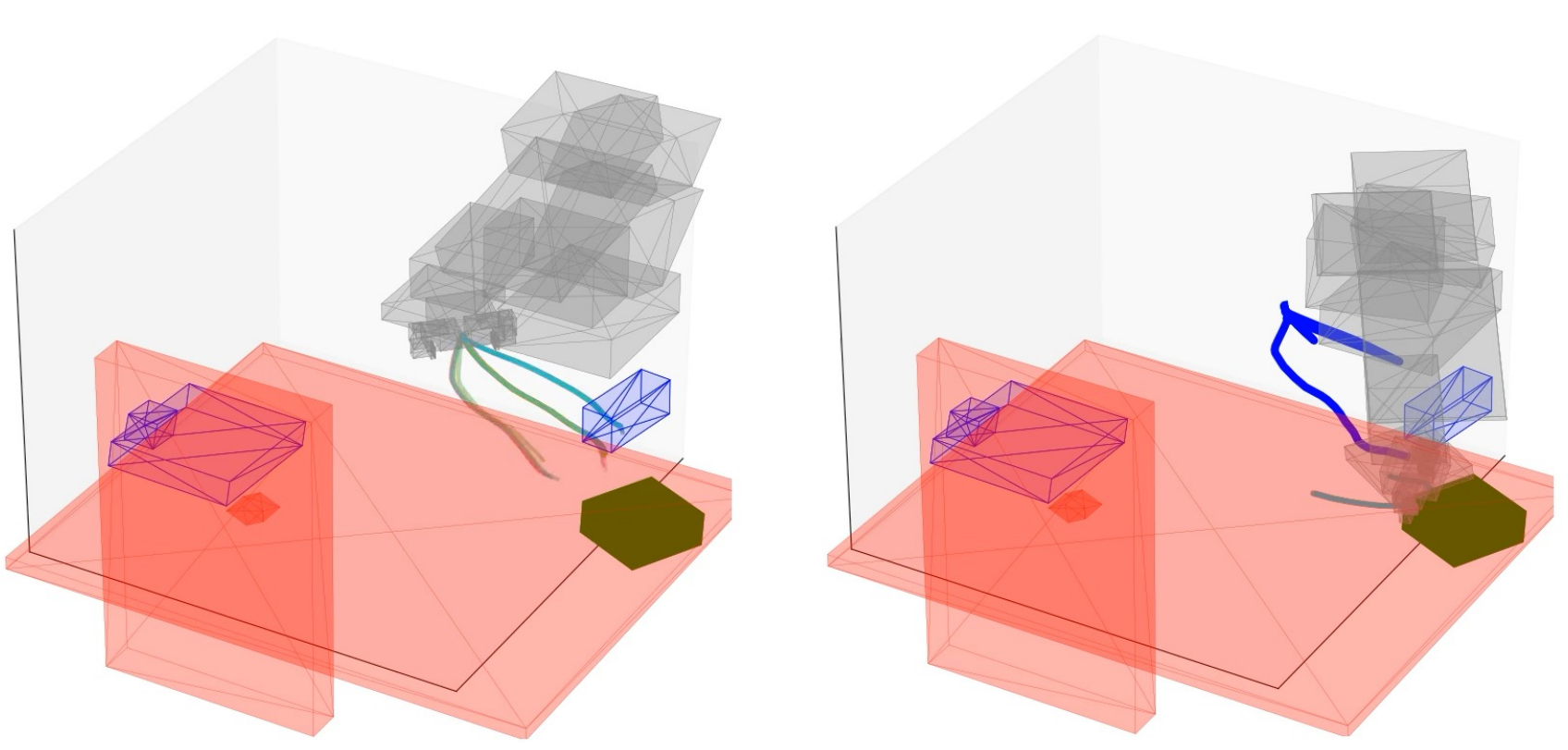}
    \caption{Visualization of planned trajectories and safety filter collision volumes; static obstacles are red, inference-time obstacles are blue, the robot is grey, and objects for interaction are green.
    (Left) Initial multi-modality in the paths that is sampled by the DP. (Right) JM2D rejects unsafe paths using safety filter implicitly throughout the denoising process to guide the generation towards safe and feasible goal-reaching trajectories.}
    \label{fig:zono-rollout}
\end{figure}
\vspace{-0.5em}
\section{Experiments: Constrained Generation}
\label{sec:experiments}
\vspace{-0.5em}

In this section, we illustrate how JM2D overcomes challenges in constrained generation.
Additionally, we assess JM2D's ability to improve conditional generation over robotics baselines.

\subsection{JM2D for Constrained Generation}
\label{subsec: eval for cond gen on toy domain}


\textbf{Setup.}
We now evaluate the efficiency of our algorithm in a conditional generation setting, which we still denote as JM2D for consistency.
We hypothesize that JM2D yields more valid, in-distribution samples than the baselines.
We evaluate JM2D on a modified Donut Toy Domain (\Cref{fig:donut2d_joint}) by removing the model-based variable $\optvar$ and imposing tight constraints that leave only a small feasible region. The task reduces to sampling from the feasible region within the donut while staying in distribution, serving as a proxy for JM2D’s ability to maintain data fidelity while complying with constraints.
We compare against a projection-based conditional diffusion model that enforces feasibility by projecting noisy samples onto the constraint set at each denoising step~\cite{christopher2025constrained}.
We report two metrics: Constraint Alignment (CA), the proportion of valid samples, and Data Fidelity (DF), assessed by alignment with the donut distribution.

\begin{figure}[h]
    \centering
    \begin{subfigure}[b]{0.3\textwidth}
        \centering
        \includegraphics[width=0.8\linewidth]{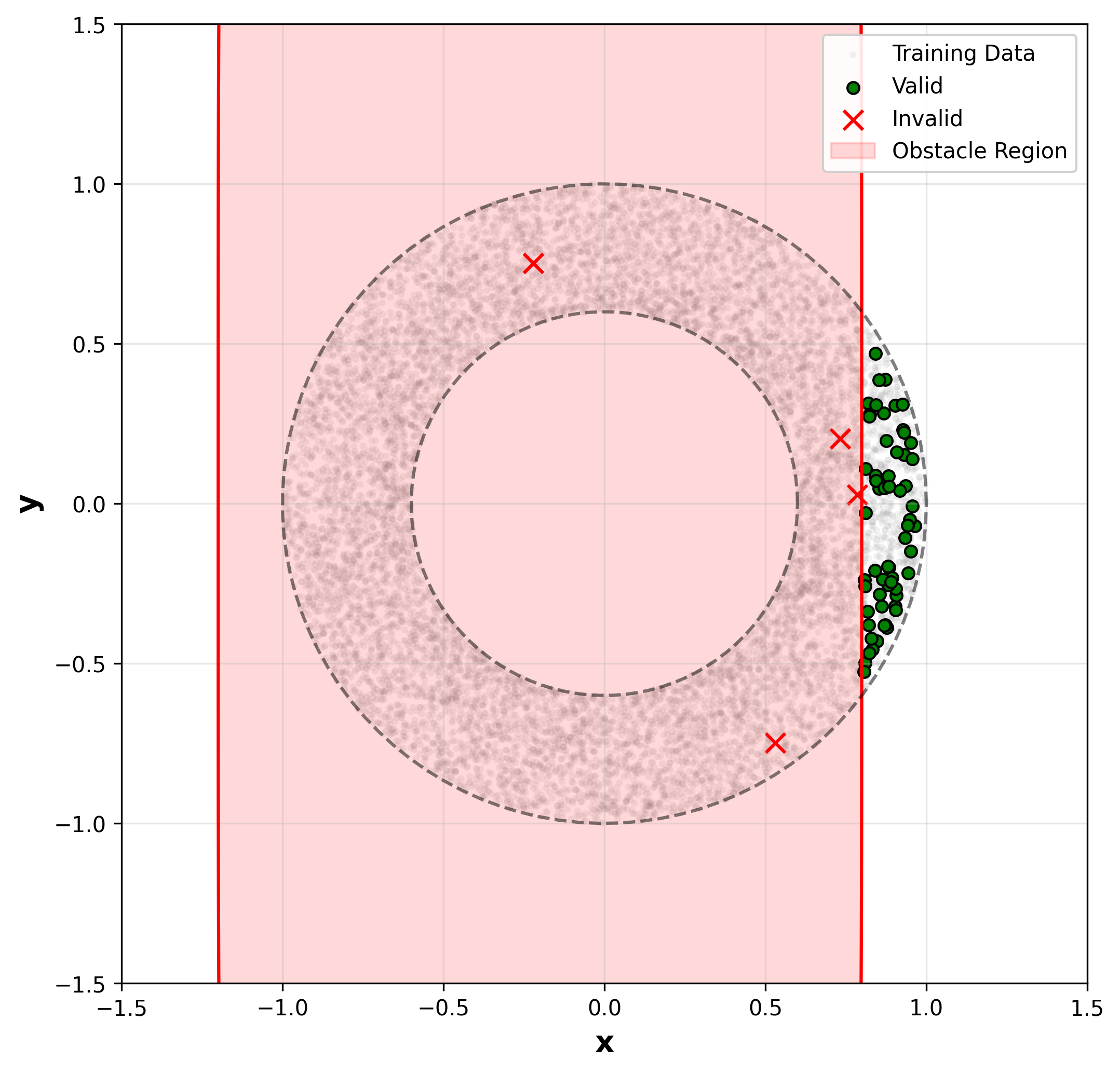}
        \caption{Final samples (ours)}
        \label{fig:donut:ours}
    \end{subfigure}
    \hfill
    \begin{subfigure}[b]{0.3\textwidth}
        \centering
        \includegraphics[width=0.8\linewidth]{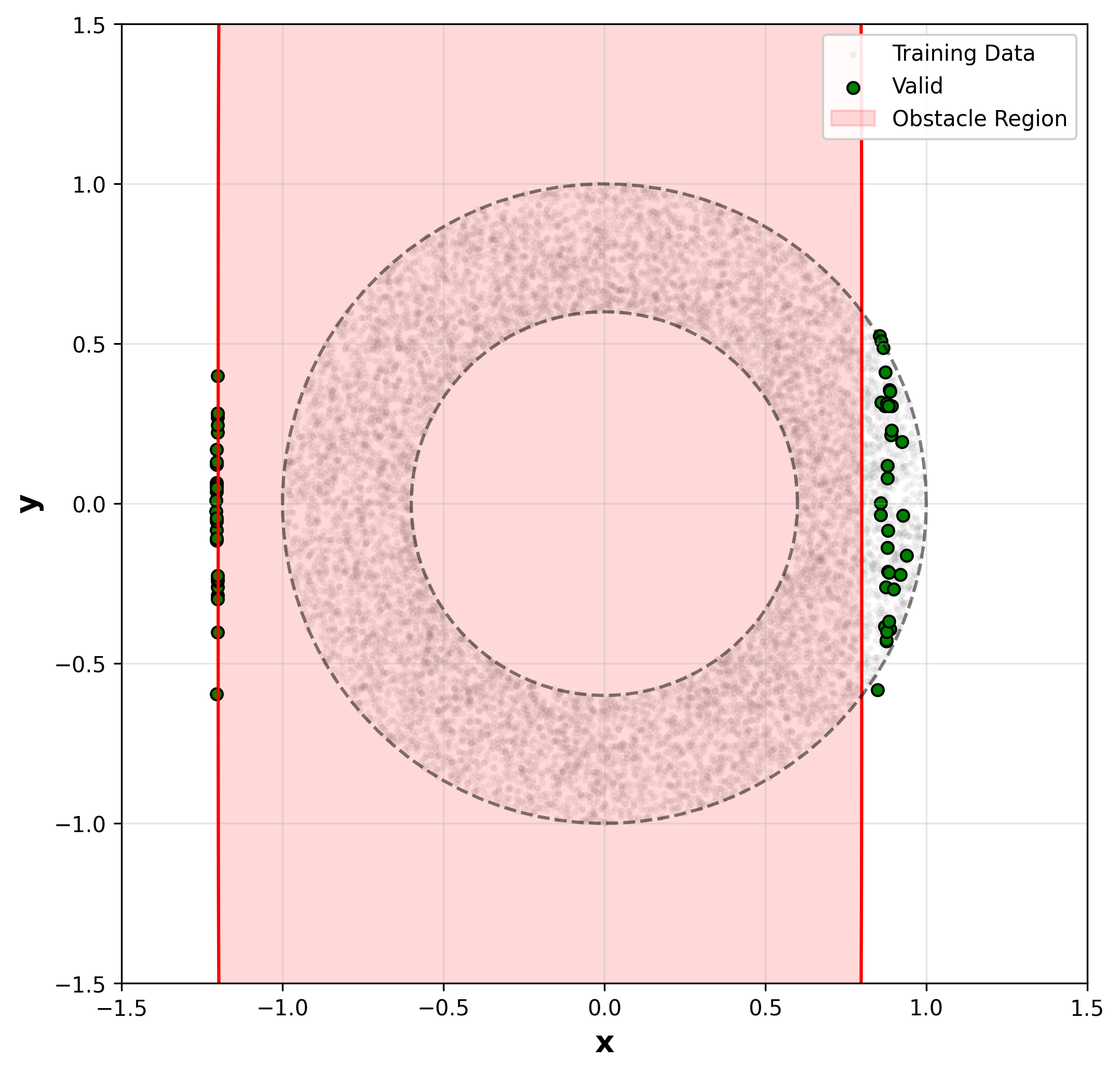}
        \caption{Final samples (projection)}
        \label{fig:donut:proj}
    \end{subfigure}
    \hfill
    \begin{subfigure}[b]{0.35\textwidth}
        \centering
        \includegraphics[width=0.8\linewidth]{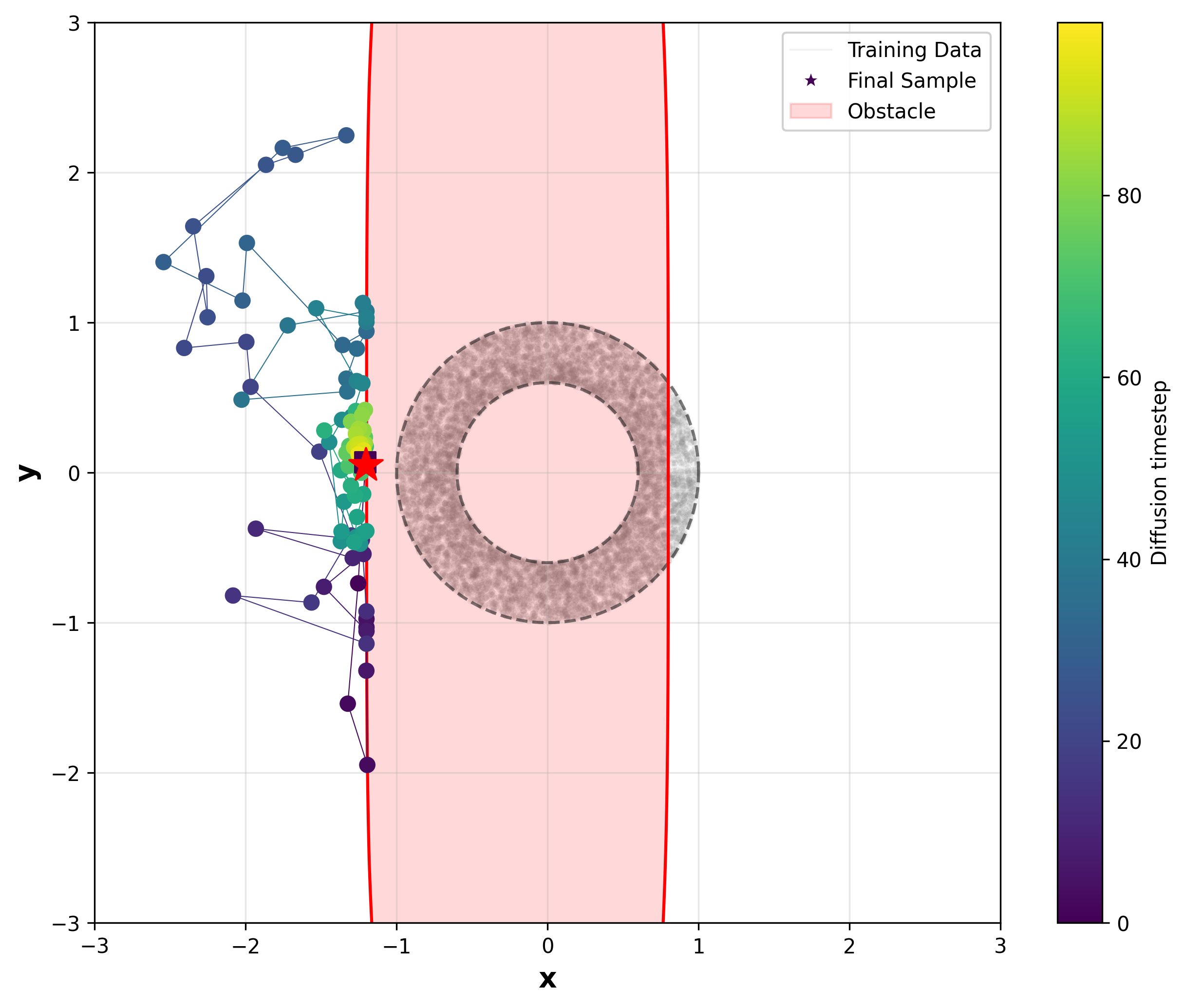}
        \caption{Diffusion trajectory (projection)}
        \label{fig:donut:traj}
    \end{subfigure}
    \caption{
        \textbf{When does projection fail?} 
        For constraint-guided diffusion in the \texttt{Donut} task.
        The gray dots show the training distribution and the red region is a test-time constraint.
        (a) Our method generates 60/64 valid samples within distribution.
        (b) Projection yields valid samples, but 31/64 lie far outside the data distribution.
        (c) Diffusion trajectory of a projected sample: early noise explores an out-of-distribution mode (left), and projection locks it in—despite valid, in-distribution alternatives (right), highlighting how projection can misalign constraint satisfaction with data fidelity.
    }
    \label{fig:donut2d_ours_vs_projection}
    \vspace{-1em}
\end{figure}

\textbf{Result: JM2D Achieves Feasible, In-Distribution Samples.}
As shown in \Cref{fig:donut2d_ours_vs_projection}, JM2D produces many samples that are both feasible and in-distribution, despite not explicitly enforcing constraint satisfaction.
In contrast, while guaranteeing feasibility, projection often generates out-of-distribution samples due to applying constraints on noisy inputs and disregarding the data distribution.

\subsection{JM2D for Conditional Generation of Robot Motion Plans}\label{subsec: eval for cond gen in robotics}

\textbf{Setup.}
Finally, we assess JM2D's ability to solve conditional generation and alignment tasks in robotics, which allows comparison against baselines from the literature and validates critical design choices for sampling with a non-differentiable interaction potential.
We consider the Avoiding environment from D3IL \cite{jia2024towards}, wherein one must generate safe trajectories for a 7DOF manipulator end effector through narrow passages under test-time constraints. The task requires a manipulator to reach a goal region marked by a green line while avoiding static obstacles. Training demonstrations are all collision-free with respect to a fixed set of obstacles (shown in red), ensuring that the learned policy remains in-distribution. The state and action spaces are $s \in \mathbb{R}^4$, consisting of the current and desired end-effector positions, and $a \in \mathbb{R}^2$, denoting Cartesian end-effector velocities. The episode length is capped at 200 timesteps. We train a Diffuser-style planner \cite{janner2022planning} to generate sequences of $(s, a)$ pairs over a fixed horizon of $H = 8$, resulting in $\sample \in \mathbb{R}^{48}$. At test time, novel obstacles (shown in blue) not present during training are introduced, making this an instance of test-time constraint generalization. We evaluate each trained model across 5 random seeds, with 100 trials per seed.

We benchmark against constrained diffusion planners: 
\begin{enumerate}
    \item \textbf{SafeDiffuser}~\cite{xiao2023safediffuser}, which employs invariance principles to ensure constraint satisfaction by design. Specifically, it applies projection at each state without modeling dynamics, leading to the well-documented “trap” behavior.
    \item \textbf{DPCC}~\cite{romer2024diffusion}, which enforces constraint satisfaction via projection onto a feasible set under approximate linear dynamics. Such projection-based diffusion planners enforce constraints by modifying noisy samples post hoc—typically without regard for the learned data distribution or task-specific structure embedded in it. 
This is problematic in settings like robotics, where feasible sets defined by test-time constraints may lie outside the support of the training data (as is often the case in Imitation Learning).
In such cases, projections may find constraint-satisfying solutions that are semantically incorrect, dynamically inconsistent, or simply implausible.
\item \textbf{MPD~\cite{carvalho2024motion}.} Motion Planning Diffusion (MPD) is a method that uses gradients to guide the denoising process. Cost functions are evaluated using noisy samples $\sample_i, i>0$, and the gradient of the cost functions with respect to $\sample_i$ is calculated. The sample is then stepped towards low-cost regions using weight $\lambda$ before the next denoising iteration.
\end{enumerate}


As before, we assess \textit{Safe Success Rate}~(SSucc.), plus \textit{Number of Constraint Violations}~(\#Vio) to assess conservativeness.

\begin{figure}[h]
    \centering
    \begin{subfigure}[b]{0.3\textwidth}
        \centering
        \includegraphics[width=\linewidth]{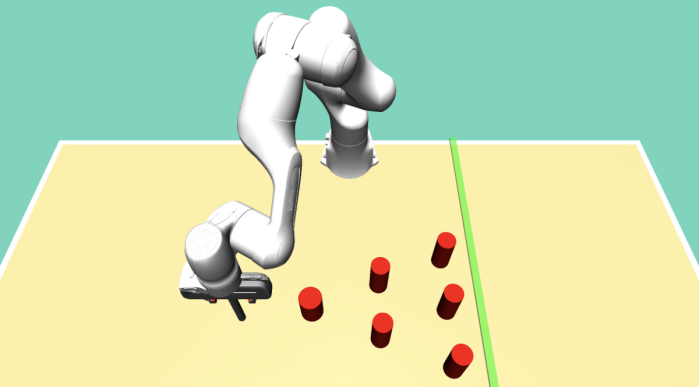}
        \caption{Initial state of \texttt{Avoiding} task}
        \label{fig:subfig1}
    \end{subfigure}
    \hfill
    \begin{subfigure}[b]{0.62\textwidth}
        \centering
        \includegraphics[width=\linewidth]{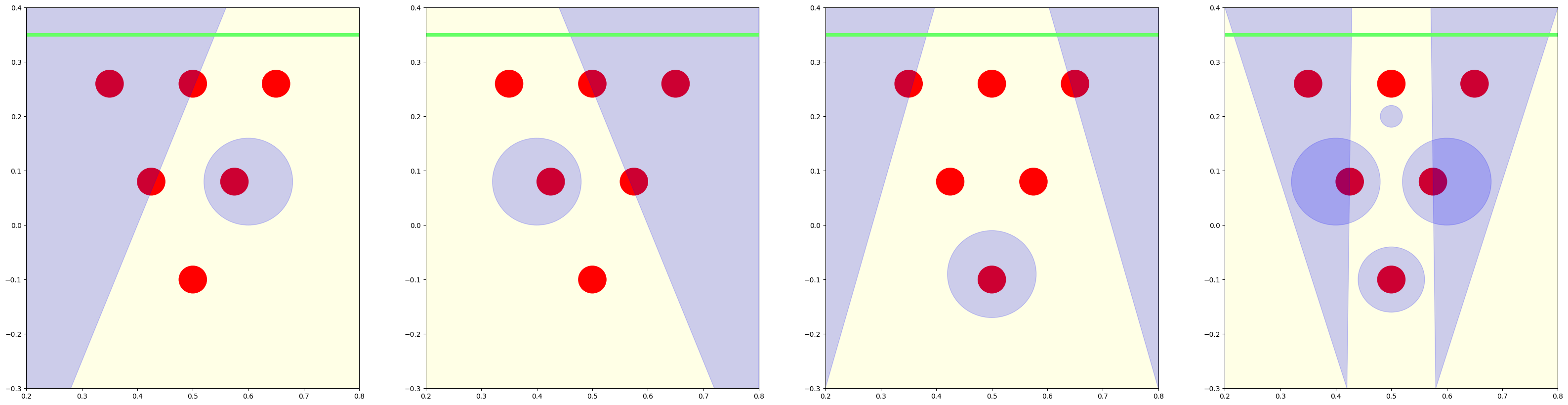}
        \caption{Top-down view of inference-time constraints}
        \label{fig:subfig2}
    \end{subfigure}
    \caption{\textbf{Task description of D3IL-Avoiding \cite{jia2024towards}}.
    (a) The robot must reach the green line without colliding with red obstacles.
    (b) At inference time, additional blue obstacles are introduced as safety constraints. The first three scenarios (Top Left, Top Right, Both Hard) are from the benchmark in DPCC \cite{romer2024diffusion}, and the rightmost scenario (Cluttered) is newly introduced.}
    \label{fig:d3il_avoid_task_desc}
\end{figure}

\textbf{Result: JM2D improves alignment over projection and gradient-based diffusion planners.}
We find that SafeDiffuser’s per‐step projection induces ``trap" behaviors by ignoring system dynamics~\cite{xiao2023safediffuser}, while trajectory‐level projection methods like DPCC rely on linearized dynamics and padded uncertainty margins that overconstrain the planner, failing in cluttered, narrow‐gap maneuvers (\Cref{fig:d3il_cluttered_avoiding_ours_vs_projection}). 
Thus, early enforcement on noisy samples at higher noise levels further degrades trajectory smoothness and fidelity, leading to semantically invalid motions.
Gradient‐based diffusion planners like MPD~\cite{carvalho2024motion} fare no better: by injecting cost gradients into denoising, they introduce noisy corrections that steer samples away from both constraints and the training prior, resulting in unsmooth, out‐of‐distribution trajectories and poor downstream performance (\Cref{tab:main_table}).
JM2D instead integrates constraints into denoising, guiding samples toward in‐distribution feasibility.
This yields dynamically feasible trajectories that satisfy constraints and respect the learned data manifold, improving success rates and constraint violations (\Cref{tab:main_table}).
Please refer to \Cref{app:projection_bad} for discussion.

\begin{table}[h] \small
    \centering
    \vspace{-1em}
    \caption{Performance of all the algorithms in the D3IL-Avoiding task under different constraints.}
    \begin{tabular}{|l|c|c|c|c|c|c|c|c|}
    \hline
       \multirow{2}{*}{Algorithm} & \multicolumn{2}{c|}{Top Left} & \multicolumn{2}{c|}{Top Right} & \multicolumn{2}{c|}{Both} & \multicolumn{2}{c|}{Cluttered}\\
       \cline{2-9} 
       & SSucc. & \#Vio & SSucc. & \#Vio & SSucc. & \#Vio & SSucc. & \#Vio  \\
       \hline 
       DPCC & $\bf 0.87$\tiny{$\pm0.04$} & $\bf0.02$ & $0.72$\tiny{$\pm0.06$} & $0.77$ & $0.66$\tiny{$\pm0.15$} & $1.46$ & $0.0$\tiny{$\pm0.0$} & $6.52$\\
       SafeDiffuser & $0.15$ \tiny{$\pm0.12$} & $13.18$\tiny & $0.07$\tiny{$\pm0.02$} & $18.27$ & $0.01$\tiny{$\pm0.01$} & $11.66$ & $0.01$\tiny{$\pm0.01$} & $22.92$ \\
       MPD & $0.13$\tiny{$\pm0.13$} & $11.97$ & $0.08$\tiny{$\pm0.06$} & $14.85$ & $0.03$\tiny{$\pm0.03$} & $10.80$ & $0.02$\tiny{$\pm0.02$} & $12.71$\\ 
       \ours & $0.77$\tiny{$\pm0.10$} & $1.13$ & $\bf 0.83$\tiny{$\pm0.15$} & $\bf 0.29$ &  $\bf 0.84$\tiny{$\pm0.10$} & $\bf 0.37$ & $\bf 0.75$\tiny{$\pm0.01$} & $\bf 0.32$\\
       \hline
    \end{tabular}
    \label{tab:main_table}
    \vspace{-6pt}
\end{table}

\begin{figure}[h]
    \centering
    \begin{subfigure}[b]{0.3\textwidth}
        \centering
        \includegraphics[width=0.8\linewidth]{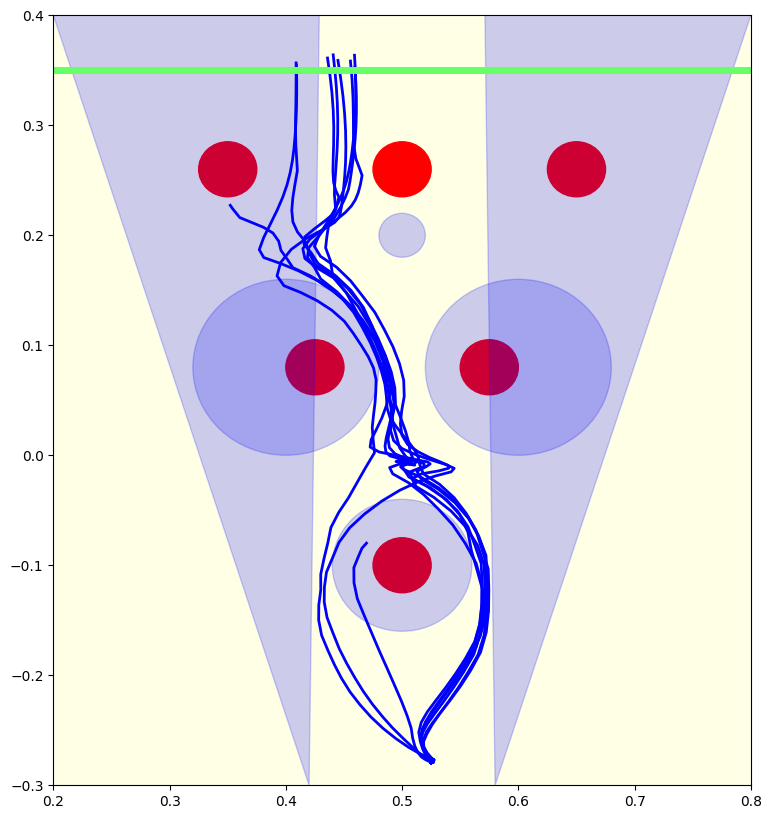}
        \caption{SafeDiffuser \cite{xiao2023safediffuser}}
        \label{fig:cluttered:safediffuser}
    \end{subfigure}
    \hfill
    \begin{subfigure}[b]{0.3\textwidth}
        \centering
        \includegraphics[width=0.8\linewidth]{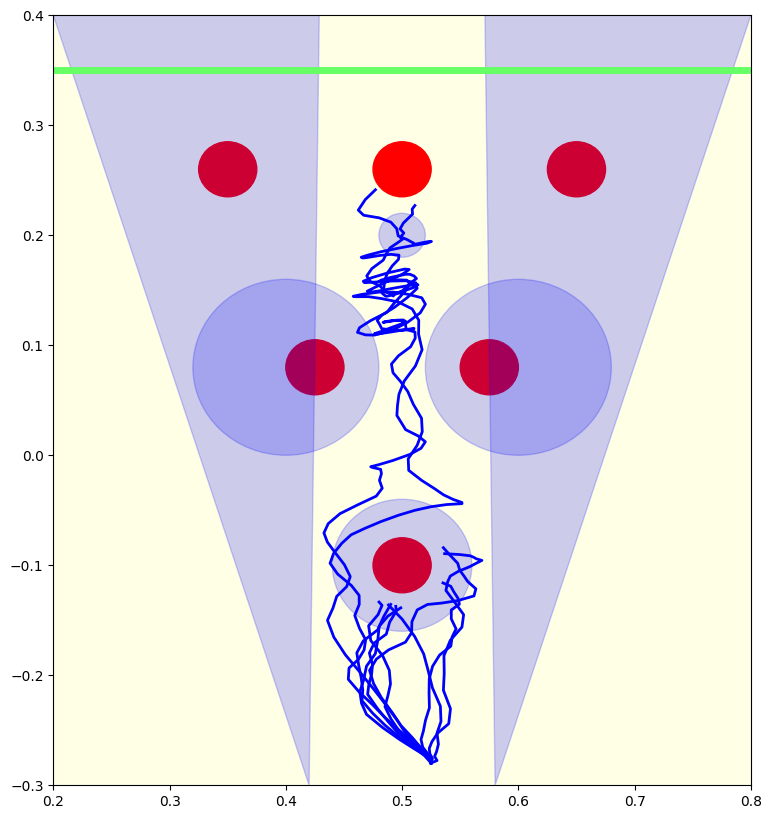}
        \caption{DPCC \cite{romer2024diffusion}}
        \label{fig:cluttered:dpcc}
    \end{subfigure}
    \hfill
    \begin{subfigure}[b]{0.3\textwidth}
        \centering
        \includegraphics[width=0.8\linewidth]{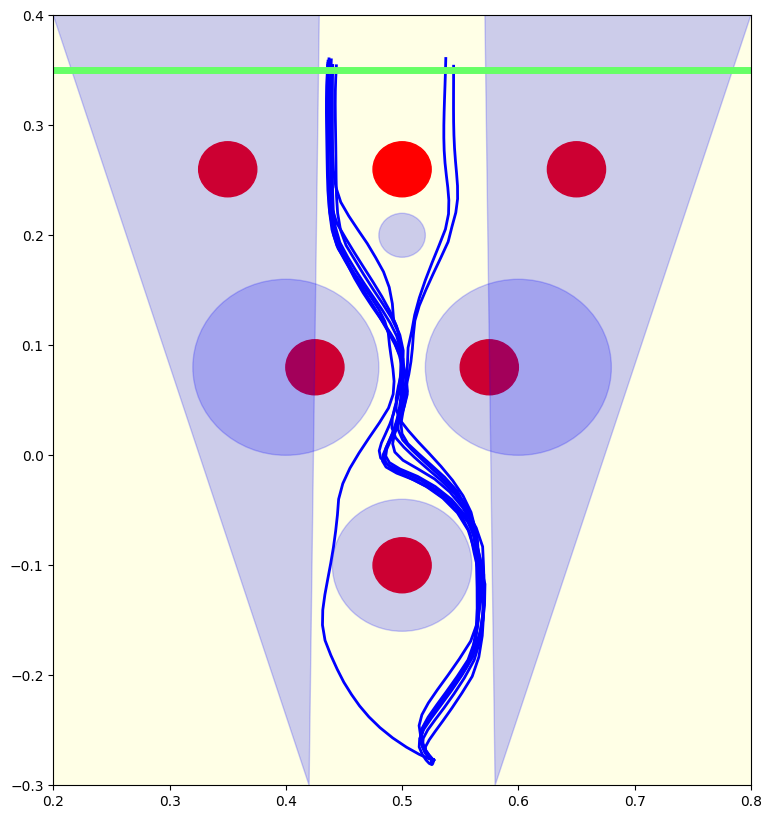}
        \caption{Ours}
        \label{fig:cluttered:ours}
    \end{subfigure}
    \caption{
    \textbf{Planning in Cluttered Environments.}
    In \texttt{Avoiding-Cluttered},
    SafeDiffuser fails due to local traps; DPCC struggles to find a feasible path due to conservative obstacle padding from model uncertainty.
    Our method generates smooth, valid plans without privileged dynamics models.
    }
    \label{fig:d3il_cluttered_avoiding_ours_vs_projection}
\end{figure}

\textbf{Result: Ablation on clean sample estimation quality.} 
As shown in \Cref{algo:mc_sampling} and \eqref{eq:proposal}, our algorithm approximates the denoising score via interaction potentials evaluated on clean samples. Since full denoising is computationally expensive, prior works approximate clean samples using a single-step Tweedie estimate~\cite{carvalho2024motion, daras2024survey} or directly rely on noisy samples. We argue that such approximations can substantially distort generation, particularly under constraints. To assess this effect, we conduct an ablation study on clean-sample estimation and its impact on generation quality.

We evaluate three metrics: Data Fidelity (DF), Objective Alignment (OA; safe success rate), and Constraint Adherence (CA; violation count). DF is measured only in the \texttt{Donut} domain using Chamfer distance between estimated and ground-truth samples. We compare four estimation schemes: (1) noisy samples, (2) single-step Tweedie, (3) $u$-step denoising + Tweedie, and (4) full-step denoising. In (3), $u$ denoising steps are applied at each noise level before Tweedie correction; in (4), samples are fully denoised before interaction potential evaluation.

Results show that in the \texttt{Donut} task, DF increases monotonically with $u$, peaking under full-step denoising. These DF gains align with higher OA and lower CA (\Cref{fig:metrics_vs_Ustep_denoising}(a)). In the robotics \texttt{Avoiding} domain—where DF cannot be directly measured—we observe the same trend: larger $u$ improves OA and reduces CA violations (\Cref{fig:metrics_vs_Ustep_denoising}(b,c)). Overall, our results highlight that effective guidance depends critically on aligning clean-sample estimates with the true support of the target distribution.

\begin{table}[h]
    \centering
    \caption{Performance of all the variants of our algorithm with varying $u$-step denoising for clean sample estimation. Generation quality increases with the number of denoising steps.}
    \begin{tabular}{|c|c|c|c|c|c|c|c|c|}
    \hline
       \multirow{2}{*}{Denoising Step} & \multicolumn{2}{c|}{Top Left} & \multicolumn{2}{c|}{Top Right} & \multicolumn{2}{c|}{Both} & \multicolumn{2}{c|}{Cluttered}\\
       \cline{2-9} 
       & SSucc. & \#Vio & SSucc. & \#Vio & SSucc. & \#Vio & SSucc. & \#Vio  \\
       \hline 
       $u=0$ & $0.13$\tiny{$\pm0.15$} & $12.92$ & $0.06$\tiny{$\pm0.07$} & $18.34$ & $0.0$\tiny{$\pm0.02$} & $12.98$ & $0.01$\tiny{$\pm0.02$} & $23.19$\\
       $u=1$ & $0.47$\tiny{$\pm0.24$} & $3.54$ & $0.33$\tiny{$\pm0.23$} & $5.24$ & $0.32$\tiny{$\pm0.18$} & $5.76$ & $0.34$\tiny{$\pm0.12$} & $1.35$\\
       $u=2$ & $0.50$\tiny{$\pm0.30$} & $3.14$ & $0.45$\tiny{$\pm0.17$} & $2.80$ & $0.50$\tiny{$\pm0.16$} & $3.48$ & $0.39$\tiny{$\pm0.15$} & $1.11$\\
       $u=5$ & $0.68$\tiny{$\pm0.23$} & $1.83$ & $0.65$\tiny{$\pm0.19$} & $1.15$ & $0.82$\tiny{$\pm0.09$} & $0.59$ & $0.63$\tiny{$\pm0.12$} & $0.65$\\
       $u=10$ & $\textbf{0.70}$\tiny{$\pm0.22$} & $\textbf{1.6}$ & $\textbf{0.81}$\tiny{$\pm0.19$} & $\textbf{0.41}$ & $\textbf{0.84}$\tiny{$\pm0.07$} & $\textbf{0.38}$ & $\textbf{0.73}$\tiny{$\pm0.09$} & $\textbf{0.49}$\\
       \hline
    \end{tabular}
    \label{tab:main_table_app}
\end{table}

\begin{figure}[h]
    \centering
    \begin{subfigure}[b]{0.4\textwidth}
        \centering
        \includegraphics[width=\linewidth]{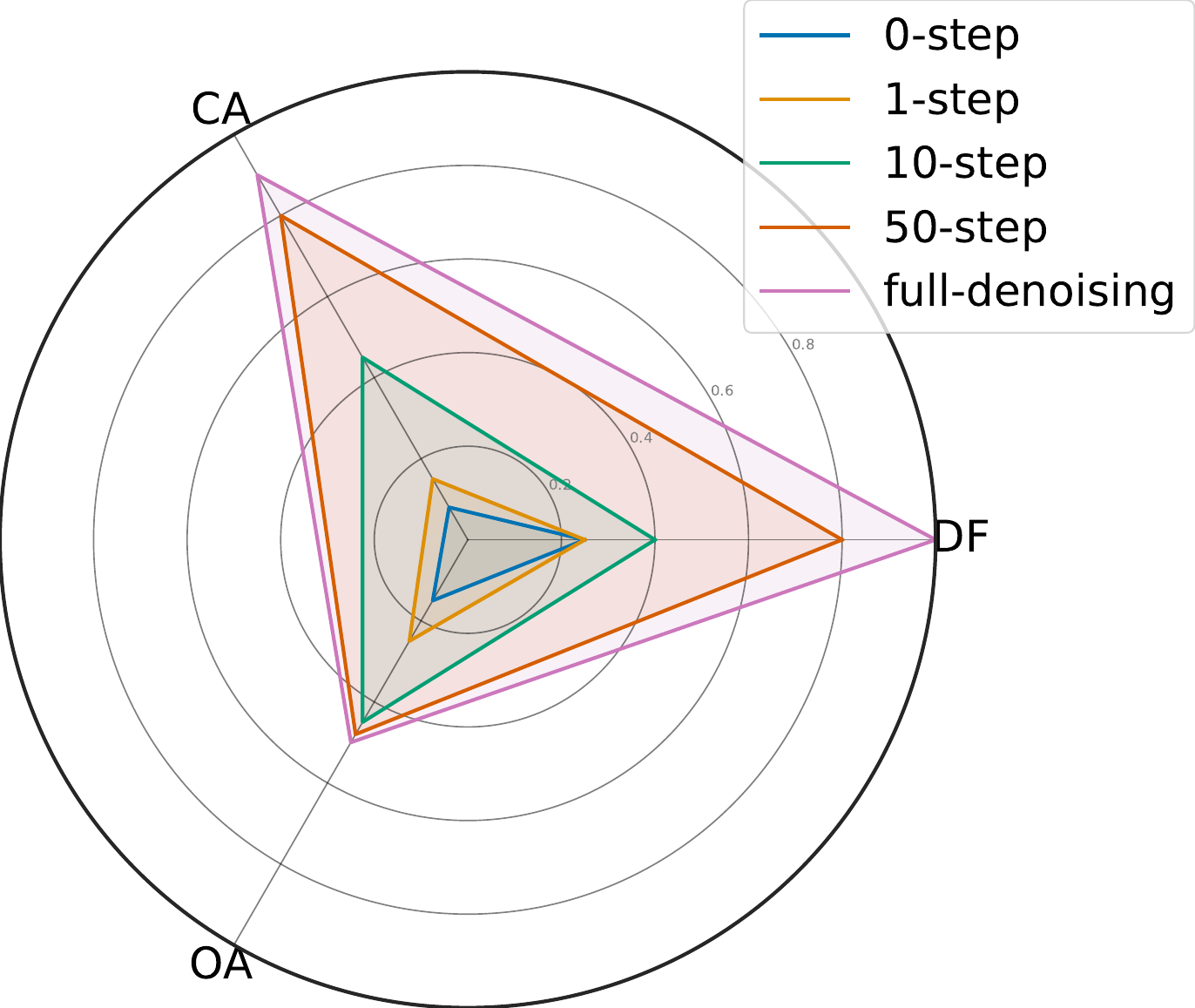}
        \caption{\texttt{Donut}: Metrics vs $u$-step denoising}
        \label{fig:h1:donut}
    \end{subfigure}
    \hfill
    \begin{subfigure}[b]{0.23\textwidth}
        \centering
        \includegraphics[width=\linewidth]{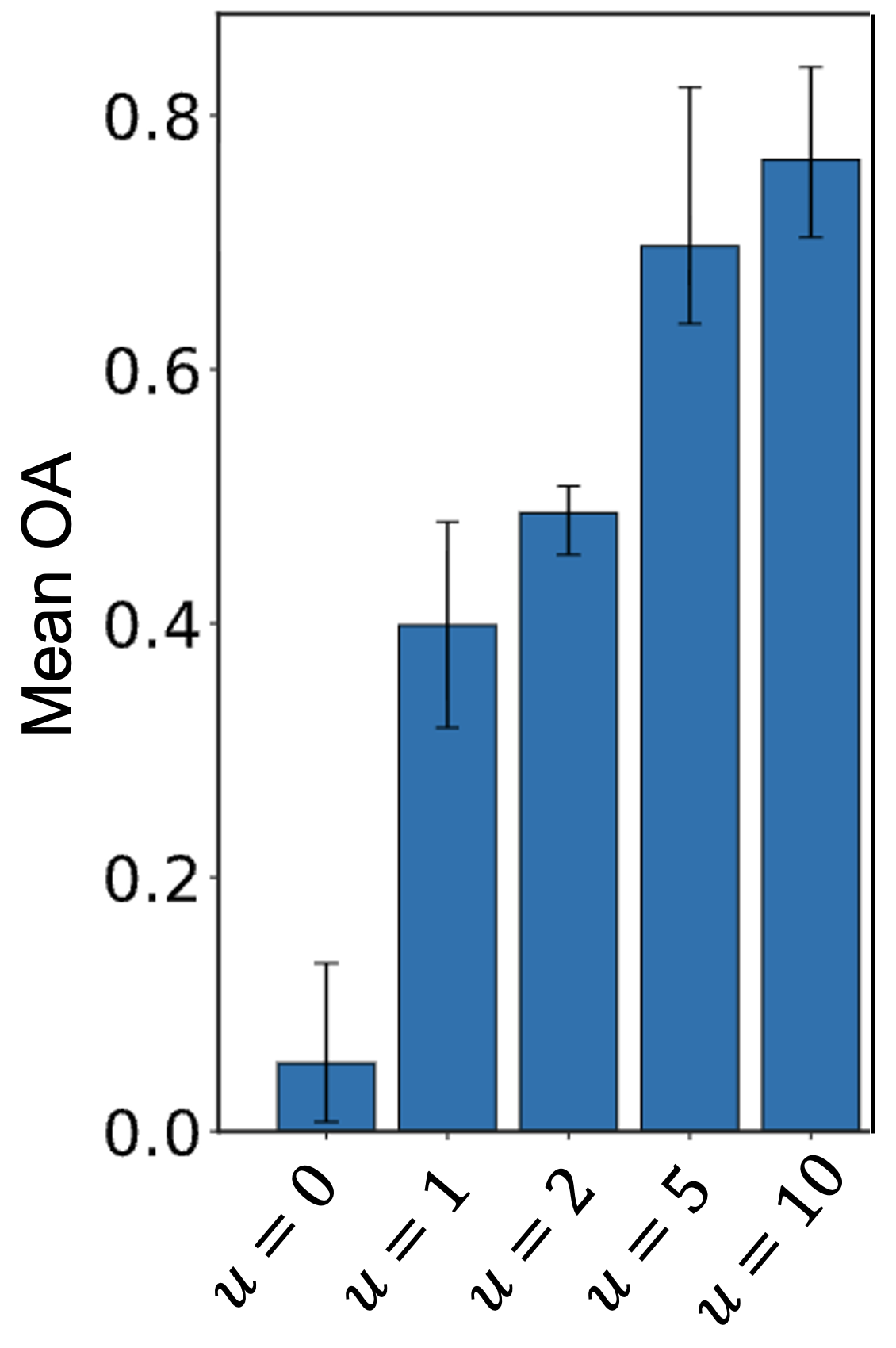}
        \caption{\texttt{Avoiding}: Mean OA vs $u$-step denoising}
        \label{fig:d3il:OA}
    \end{subfigure}
    \hfill
    \begin{subfigure}[b]{0.23\textwidth}
        \centering
        \includegraphics[width=\linewidth]{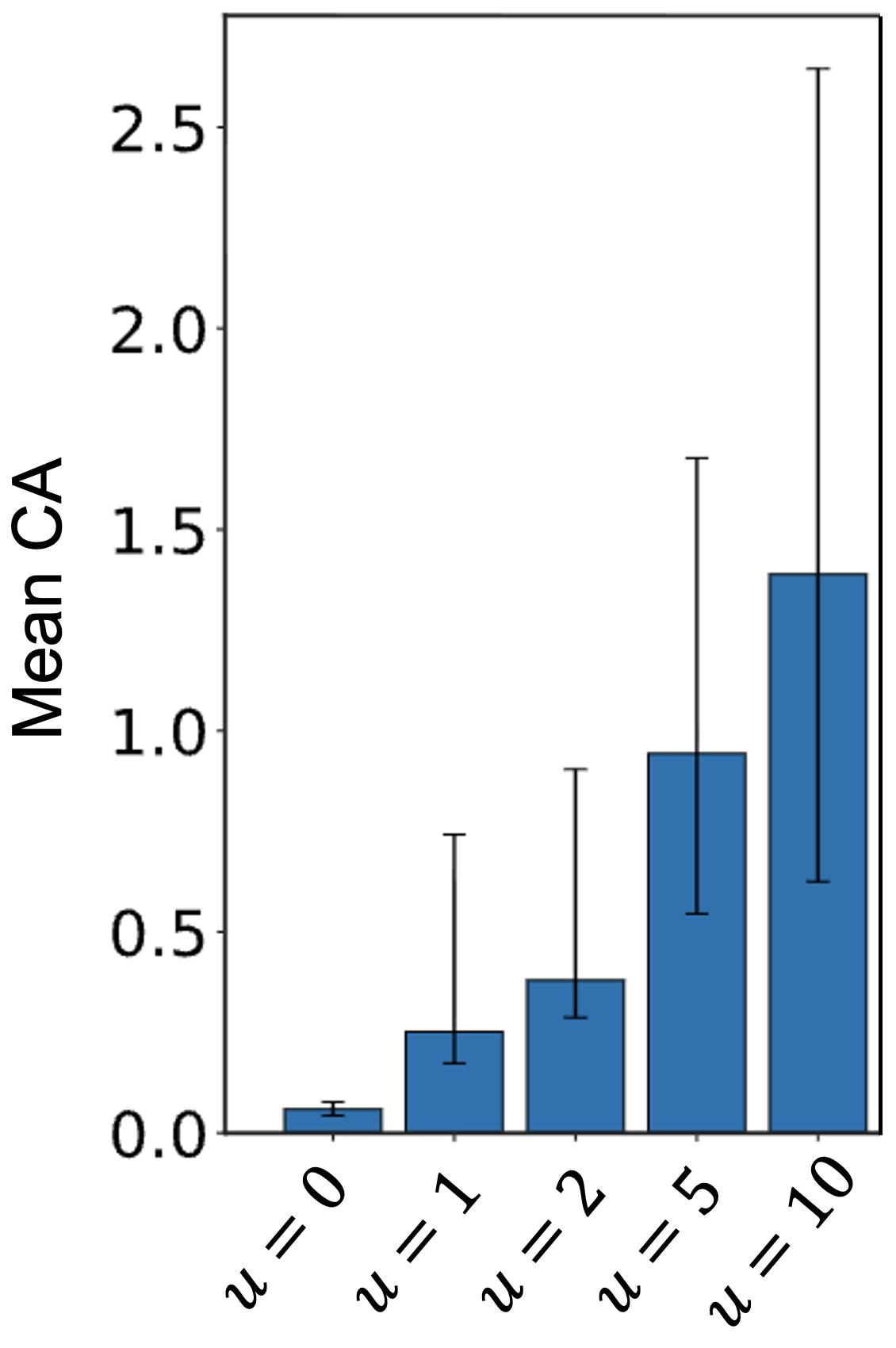}
        \caption{\texttt{Avoiding}: Mean CA vs $u$-step denoising}
        \label{fig:d3il:CA}
    \end{subfigure}
    \caption{
    \textbf{Impact of clean sample quality on guidance effectiveness.}
    (a) On the \texttt{donut} task, increasing denoising steps improves Distribution Fidelity (DF), leading to higher OA and lower CA. (b, c) On the \texttt{Avoiding} task, where DF is not directly measurable, increased denoising similarly improves OA and reduces CA, confirming that better-aligned clean samples enhance guidance.
    }
    \label{fig:metrics_vs_Ustep_denoising}
    \vspace{-1.5em}
\end{figure}
\section{Limitations}
\label{sec:limitations}

Despite the improvements that we found with joint sampling, our framework still suffers from various limitations.
Foremost, we achieve alignment at the cost of increased computational overhead---as we rely on denoising process to construct a Monte-Carlo sample.
Automatically determining when alignment is beneficial remains an open question and is left for future work.
Furthermore, our method assumes that the generative model’s support overlaps the constraint-feasible region; its effectiveness fades when the learned policy becomes less multimodal.
Finally, as our approach cannot strictly enforce constraints, a separate ``backup'' optimization module is still needed to ensure hard-constraint satisfaction.

\section{Conclusion}
\vspace*{-5pt}

We introduced \textit{Joint Model-based Model-free Diffusion (JM2D)}, a unified generative framework that jointly samples from diffusion-based generative models and model-based optimization modules. 
This leads to a novel safety filter that improves performance while maintaining strict safety guarantees compared to traditional decoupled approaches. 
Our experiments show that JM2D performs robustly in realistic robotics tasks with non-differentiable constraints, outperforming conditional diffusion motion planners and conventional safety filters. 
Looking forward, we plan to extend JM2D to other integration problems, such as aligning generative planners with model-based tracking controllers, or aligning multiple robotics modules.


\acknowledgments{
This project was supported by funding from the AI Manufacturing Pilot Facility project under Georgia Artificial Intelligence in Manufacturing (Georgia AIM) from the U.S. Department of Commerce Economic Development Administration, Award 04-79-07808.
}

\bibliography{references}  

\newpage
\appendix
\clearpage
\appendix

\renewcommand\thefigure{\thesection.\arabic{figure}}
\setcounter{figure}{0}    

\renewcommand\thetable{\thesection.\arabic{table}}
\setcounter{table}{0}

\renewcommand\thealgorithm{\thesection.\arabic{algorithm}}
\setcounter{algorithm}{0}

\section{Additional Derivations}
\subsection{Derivation of Eq. \eqref{eq:joint_score_general}} 
\label{app:extended_proof:score_lemma}
We detail the derivation of \eqref{eq:joint_score_general}, which expresses the joint score function prior to applying importance sampling.
\begin{align}
    \nabla_{\jointsample_{i}}\log\prob_{i}(\jointsample_{i})
    &= \frac{
        \nabla_{\jointsample_{i}}\prob_{i}(\jointsample_{i})
    }{
        \prob_{i}(\jointsample_{i})
    } \label{eq:joint_score_1} \\[5pt]
    &= \frac{
        \nabla_{\jointsample_{i}} \int \prob_{\alphajoint_i}(\jointsample_{i} | \jointsample_{0})\prob_{0}(\jointsample_{0}) \, d\jointsample_{0}
    }{
        \prob_{i}(\jointsample_{i})
    } \label{eq:joint_score_2} \\[5pt]
    &= \frac{
        \int \nabla_{\jointsample_{i}} \log\prob_{\alphajoint_i}(\jointsample_{i} | \jointsample_{0})
        \prob_{\alphajoint_i}(\jointsample_{i} | \jointsample_{0})\prob_{0}(\jointsample_{0})
        \, d\jointsample_{0}
    }{
        \prob_{i}(\jointsample_{i})
    } \label{eq:joint_score_3} \\[5pt]
    &= \int \nabla_{\jointsample_{i}} \log\prob_{\alphajoint_i}(\jointsample_{i} | \jointsample_{0}) \, \prob_{0|i}(\jointsample_{0}|\jointsample_{i}) \, d\jointsample_{0} \label{eq:joint_score_4} \\[5pt]
    &= \int s_{\alpha^{\jointsample}_{i}}(\jointsample_{0}|\jointsample_{i}) \, \prob_{0|i}(\jointsample_{0}|\jointsample_{i}) \, d\jointsample_{0} \label{eq:joint_score_5}
\end{align}
We first marginalize the forward perturbation kernel from \eqref{eq:joint_score_1} to \eqref{eq:joint_score_2}, then apply the log-derivative trick from \eqref{eq:joint_score_2} to \eqref{eq:joint_score_3}. Using Bayes’ rule, we obtain \eqref{eq:joint_score_4}, and finally substitute the score term with the predefined Tweedie score $s_{\alpha^{\jointsample}_{i}}(\jointsample_{0}|\jointsample_{i})$ in \eqref{eq:joint_score_5}.
Here, $\prob_{0|i}(\jointsample_{0}|\jointsample_{i})$ is the target distribution for sampling, but it is intractable—motivating the use of importance sampling.

\subsection{Proof of \Cref{thm:main}} \label{app:extended_proof_main_thm}
\begin{proof}
It is sufficient to show that importance weight 
$\frac{\prob_{0|i}(\jointsample_{0}|\jointsample_{i})}{q(\jointsample_{0}|\jointsample_{i})}$ becomes proportional to $\interactionfunc(\sample_{0}, \optvar_{0})\prob^{\optvar}_{0}(\optvar_0)$ under the proposal distribution $q(\jointsample_{0} | \jointsample_{i})$ given in \eqref{eq:proposal}. Since we use a self-normalized importance sampling scheme, any term independent of $\jointsample_0$ cancels out.
Hence we show:
\begin{align*}
\frac{
    \prob_{0|i}(\jointsample_{0} \mid \jointsample_{i})
}{
    q(\jointsample_{0} \mid \jointsample_{i})
}
&=
\frac{
    \prob_{0|i}(\jointsample_{0} \mid \jointsample_{i})
}{
    \prob^{\sample}_{0|i}(\sample_{0} \mid \sample_{i})q^{\optvar}(\optvar_0 \mid \optvar_i)
}
\\
&= 
\frac{
    \prob_{\alphajoint_i}(\jointsample_{i} \mid \jointsample_{0}) \prob_{0}(\jointsample_{0})
}{
    \prob_i(\jointsample_{i})
} \cdot
\frac{
    \prob^{\sample}_{i}(\sample_{i})
}{
    \prob^{\sample}_{\alphasample_i}(\sample_{i} \mid \sample_{0}) \prob^{\sample}_{0}(\sample_{0}) q^{\optvar}(\optvar_{0} \mid \optvar_{i})
}
\\
&= 
\frac{
    \prob^{\sample}_{i}(\sample_{i})
}{
    \prob_i(\jointsample_{i})
} \cdot
\frac{
    \prob_{\alphajoint_i}(\jointsample_{i} \mid \jointsample_{0})
}{
    \prob^{\sample}_{\alphasample_i}(\sample_{i} \mid \sample_{0}) q^{\optvar}(\optvar_{0} \mid \optvar_{i})
} \cdot
\frac{
    \prob_0(\jointsample_0)
}{
    \prob_0^{\sample}(\sample_0)
} \\
&=
\frac{
    \prob^{\sample}_{i}(\sample_{i})
}{
    \prob_i(\jointsample_{i})
} \cdot
\frac{
    \cancel{\prob^{\sample}_{\alphajoint_i}(\sample_{i} \mid \sample_{0})} \prob^{\optvar}_{\alphaoptvar_i}(\optvar_{i} \mid \optvar_{0})
}{
    \cancel{\prob^{\sample}_{\alphasample_i}(\sample_{i} \mid \sample_{0})} q^{\optvar}(\optvar_{0} \mid \optvar_{i})
} \cdot
\frac{
    \cancel{\prob_0^{\sample}(\sample_0)} \prob^{\param}_{0}(\optvar_{0})\interactionfunc(\sample_{0}, \optvar_{0})
}{
    \cancel{\prob_0^{\sample}(\sample_0)}
}\\
&=
Z(\jointsample_{i}, i) \cdot \prob_{0}^{\optvar}(\param_{0})\interactionfunc(\sample_{0}, \param_{0})
\end{align*}
\end{proof}

\subsection{Proof of \Cref{cor:special_case}}
\label{app:extended_proof_cor}

\begin{proof}
From \Cref{thm:main}, we represent the joint score as
\begin{align*}
    \nabla_{\jointsample_{i}} \log \prob_{i}(\jointsample_{i}) 
    = 
    \frac{
        \mathbb{E}_{\hat{\jointsample}_{0} \sim q(\cdot \mid \jointsample_{i})} 
        \left[
            s_{\alphajoint_{i}}(\hat{\jointsample}_0 \mid \jointsample_{i})
            \interactionfunc(\hat{\sample}_{0}, \hat{\optvar}_{0})\prob^{\optvar}_{0}(\hat{\optvar}_{0})
        \right]
    }{
        \mathbb{E}_{\hat{\jointsample}_{0} \sim q(\cdot \mid \jointsample_{i})}
        \left[
            \interactionfunc(\hat{\sample}_{0}, \hat{\optvar}_{0})\prob^{\optvar}_{0}(\hat{\optvar}_0)
        \right]
    }.
\end{align*}

Since the proposal distribution factorizes as 
\[
q(\jointsample_0 \mid \jointsample_{i}) = \prob^{\sample}_{0 \mid i}(\sample_0 \mid \sample_i) \cdot q^{\optvar}(\optvar_0 \mid \optvar_i),
\]
and the interaction potential also factorizes as 
\[
\interactionfunc(\sample_0, \optvar_0) = \interactionfunc^{\sample}(\sample_0) \cdot \interactionfunc^{\optvar}(\optvar_0),
\]
we can decompose the joint score into separate expectations.

\paragraph{Score with respect to $\sample_i$:}
\begin{align*}
    \nabla_{\sample_{i}} \log \prob_{i}(\jointsample_{i}) 
    &= 
    \frac{
        \mathbb{E}_{\hat{\sample}_{0} \sim \prob^{\sample}_{0 \mid i}(\cdot \mid \sample_{i})} 
        \left[
            s_{\alphasample_{i}}(\hat{\sample}_0 \mid \sample_{i}) \interactionfunc^{\sample}(\hat{\sample}_{0})
        \right]
        \cdot
        \cancel{
            \mathbb{E}_{\hat{\optvar}_{0} \sim q^{\optvar}(\cdot \mid \optvar_{i})} 
            \left[
                \interactionfunc^{\optvar}(\hat{\optvar}_{0})
            \right]
        }
    }{
        \mathbb{E}_{\hat{\sample}_{0} \sim \prob^{\sample}_{0 \mid i}(\cdot \mid \sample_{i})} 
        \left[
            \interactionfunc^{\sample}(\hat{\sample}_{0})
        \right]
        \cdot
        \cancel{
            \mathbb{E}_{\hat{\optvar}_{0} \sim q^{\optvar}(\cdot \mid \optvar_{i})} 
            \left[
                \interactionfunc^{\optvar}(\hat{\optvar}_{0})
            \right]
        }
    } \\
    &= 
    \frac{
        \mathbb{E}_{\hat{\sample}_{0} \sim \prob^{\sample}_{0 \mid i}(\cdot \mid \sample_{i})} 
        \left[
            s_{\alphasample_{i}}(\hat{\sample}_0 \mid \sample_{i}) \interactionfunc^{\sample}(\hat{\sample}_{0})
        \right]
    }{
        \mathbb{E}_{\hat{\sample}_{0} \sim \prob^{\sample}_{0 \mid i}(\cdot \mid \sample_{i})} 
        \left[
            \interactionfunc^{\sample}(\hat{\sample}_{0})
        \right]
    }.
\end{align*}

\paragraph{Score with respect to $\optvar_i$:}
\begin{align*}
    \nabla_{\optvar_{i}} \log \prob_{i}(\jointsample_{i}) 
    &= 
    \frac{
        \cancel{
            \mathbb{E}_{\hat{\sample}_{0} \sim \prob^{\sample}_{0 \mid i}(\cdot \mid \sample_{i})} 
            \left[
                \interactionfunc^{\sample}(\hat{\sample}_{0})
            \right]
        }
        \cdot
        \mathbb{E}_{\hat{\optvar}_{0} \sim q^{\optvar}(\cdot \mid \optvar_{i})} 
        \left[
            s_{\alphaoptvar_{i}}(\hat{\optvar}_0 \mid \optvar_{i}) \interactionfunc^{\optvar}(\hat{\optvar}_{0})
        \right]
    }{
        \cancel{
            \mathbb{E}_{\hat{\sample}_{0} \sim \prob^{\sample}_{0 \mid i}(\cdot \mid \sample_{i})} 
            \left[
                \interactionfunc^{\sample}(\hat{\sample}_{0})
            \right]
        }
        \cdot
        \mathbb{E}_{\hat{\optvar}_{0} \sim q^{\optvar}(\cdot \mid \optvar_{i})} 
        \left[
            \interactionfunc^{\optvar}(\hat{\optvar}_{0})
        \right]
    } \\
    &=
    \frac{
        \mathbb{E}_{\hat{\optvar}_{0} \sim q^{\optvar}(\cdot \mid \optvar_{i})} 
        \left[
            s_{\alphaoptvar_{i}}(\hat{\optvar}_0 \mid \optvar_{i}) \interactionfunc^{\optvar}(\hat{\optvar}_{0})
        \right]
    }{
        \mathbb{E}_{\hat{\optvar}_{0} \sim q^{\optvar}(\cdot \mid \optvar_{i})} 
        \left[
            \interactionfunc^{\optvar}(\hat{\optvar}_{0})
        \right]
    }.
\end{align*}

\noindent The score term $\nabla_{\sample_{i}} \log \prob_{i}(\jointsample_{i})$ resembles the conditional guidance used in classifier-based diffusion \cite{huang2024symbolic}. 
Meanwhile, $\nabla_{\optvar_{i}} \log \prob_{i}(\jointsample_{i})$ corresponds to Eq. (9) in \cite{pan2024model}.
\end{proof}

\section{Real world implementation details}
\label{app:realworld}

We provide the sampling hyperparameters of our real-robot setup below:
\begin{table}[h]
    \centering
    \caption{Real-robot JM2D sampling hyperparameters}
    \label{tab:gtamp_training}
    \begin{tabular}{lc}
        \toprule
        \textbf{Hyperparameter} & \textbf{Value} \\
        \midrule
        Denoising process & \texttt{DDPM} \\
        Number of denoising timesteps $I$ & 25 \\
        Number of action samples $N$ & 128 \\
        Number of optimization param samples $N_K$ & 128\\
        Sample selection strategy & \texttt{Rejection Sampling}\\
        Diffusion schedule for action $\alpha^x$ & \texttt{cosine}\\
        Diffusion schedule for optimization parameter $\alpha^k$ & \texttt{linear}\\
        \bottomrule
    \end{tabular}
\end{table}

We use the following public libraries:
\begin{enumerate}
    \item Diffusers: \url{https://huggingface.co/docs/diffusers/en/index}
    \item Robomimic: \url{https://github.com/ARISE-Initiative/robomimic}
    \item Zonopy: \url{https://github.com/roahmlab/zonopy}
\end{enumerate}

\section{Additional experiments: Comparison to Projection}
\label{app:projection_bad}


Here, we discuss how our proposed method addresses the weaknesses of projection methods for balancing the safety-performance tradeoff.
As shown in Section \ref{subsec: eval for cond gen in robotics}, our proposed method enables better alignment between constraints and data fidelity compared to projection-based diffusion planners; in short, JM2D produces smoother and safer trajectories.
This is because JM2D explicitly optimizes the trade-off between hard constraint satisfaction and fidelity to the data-driven priors (e.g., as realistic dynamics or motion patterns) implicitly learned by the generative model.
To proceed, we discuss the limitations of projection through a toy example, and highlight how projection suffers in more complicated examples, negatively impacting data fidelity of trajectories and overall system performance in robotics.

\textbf{Limitations of Projection.}
Projection-based diffusion planners enforce constraints by modifying noisy samples post hoc—typically without regard for the learned data distribution or task-specific structure embedded in it. 
This is problematic in settings like robotics, where feasible sets defined by test-time constraints may lie outside the support of the training data (as is often the case in Imitation Learning).
In such cases, projections may find constraint-satisfying solutions that are semantically incorrect, dynamically inconsistent, or simply implausible.

We illustrate this failure mode in a 2D toy task, where a test-time constraint creates two disjoint feasible regions—only one of which overlaps with the training distribution. 
As shown in \Cref{fig:donut2d_ours_vs_projection}, projection moves samples toward the closest feasible region, regardless of whether the sample remains consistent with the training distribution, leading to low-fidelity samples.
In contrast, our method guides the generative process toward feasible, in-distribution samples.

\begin{figure}[h]
    \centering
    \includegraphics[width=0.8\textwidth]{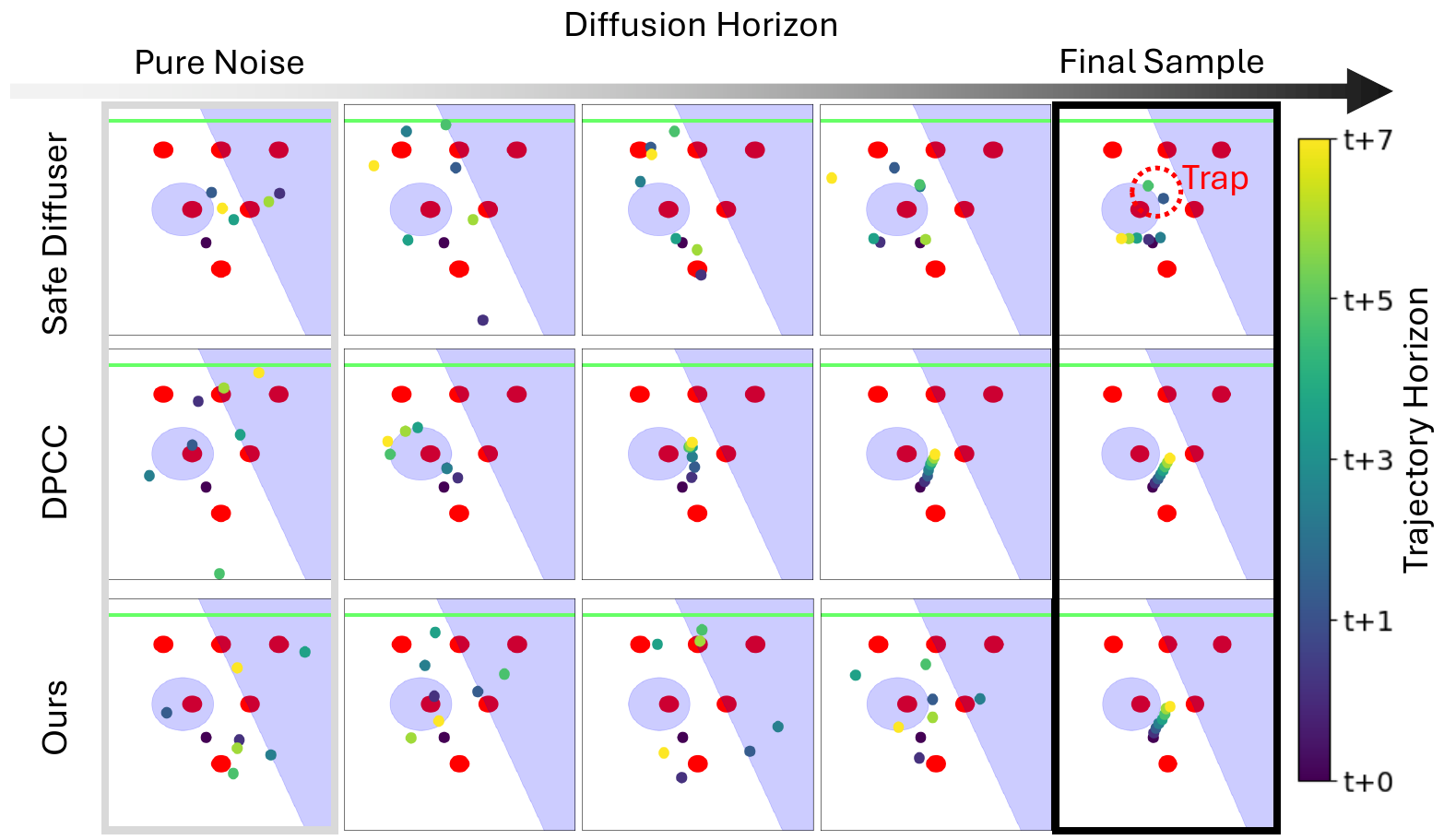}
    \caption{
    \textbf{Denoising behavior in constrained trajectory generation.}
    Visualization of the denoising process across different methods on the \texttt{Avoiding-Left} task.
    Each row shows trajectory evolution from pure noise (left) to final output (right), with color indicating timestep.
    SafeDiffuser projects each state independently, causing trap behavior.
    DPCC imposes dynamics constraints at early stages, which introduces mismatches with training-time data and leads to degraded smoothness.
    Our method preserves randomness and structure across diffusion steps while satisfying constraints.
    }
    \label{fig:d3il_diffusion_comparison}
\end{figure}

\textbf{Effect of Incorporating Dynamics.} 
One of the distinctive data-embedded features in demonstrations is robot dynamics.
In \Cref{fig:d3il_diffusion_comparison}, we visualize how different methods evolve over the diffusion horizon. 
SafeDiffuser applies projection at each state without modeling dynamics, leading to the well-documented “trap” behavior~\cite{xiao2023safediffuser}.
Although trajectory-level projection, such as that in DPCC, introduces a dynamics model to avoid this issue, it does so by injecting hand-designed approximations (e.g., linearized dynamics). 
To account for modeling error, these methods often pad constraints with uncertainty margins, which introduces conservativeness. 
As seen in \Cref{fig:d3il_cluttered_avoiding_ours_vs_projection}, this conservativeness can block feasible plans in cluttered environments requiring narrow-gap maneuvers.

\begin{figure}[h]
    \centering
    \hfill
    \begin{subfigure}[b]{0.25\textwidth}
        \centering
        \includegraphics[width=\linewidth]{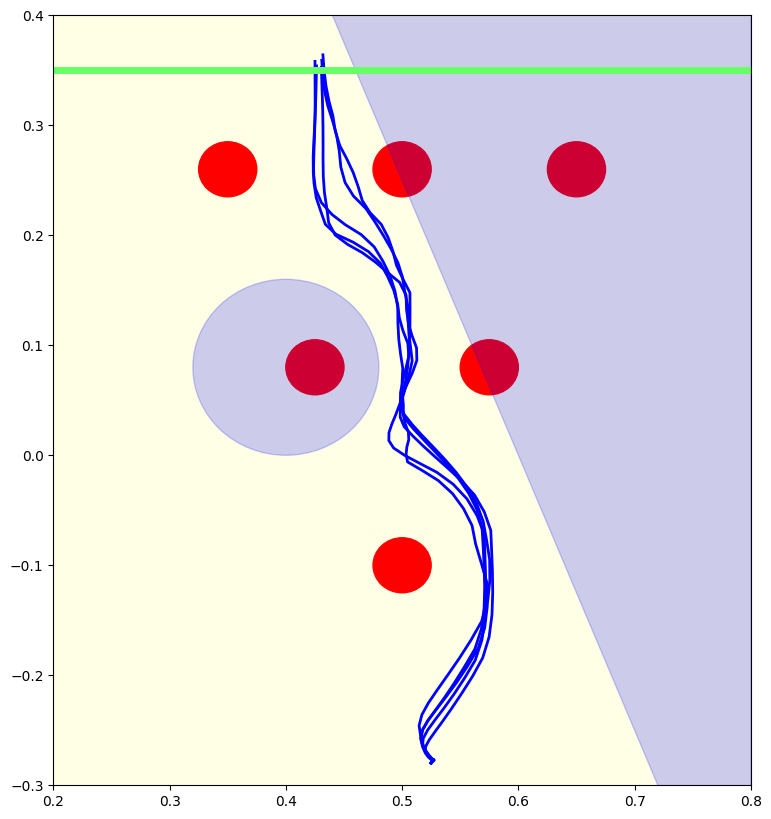}
        \caption{JM2D (\ours)}
        \label{fig:cluttered:safediffuser}
    \end{subfigure}
    \hfill
    \begin{subfigure}[b]{0.25\textwidth}
        \centering
        \includegraphics[width=\linewidth]{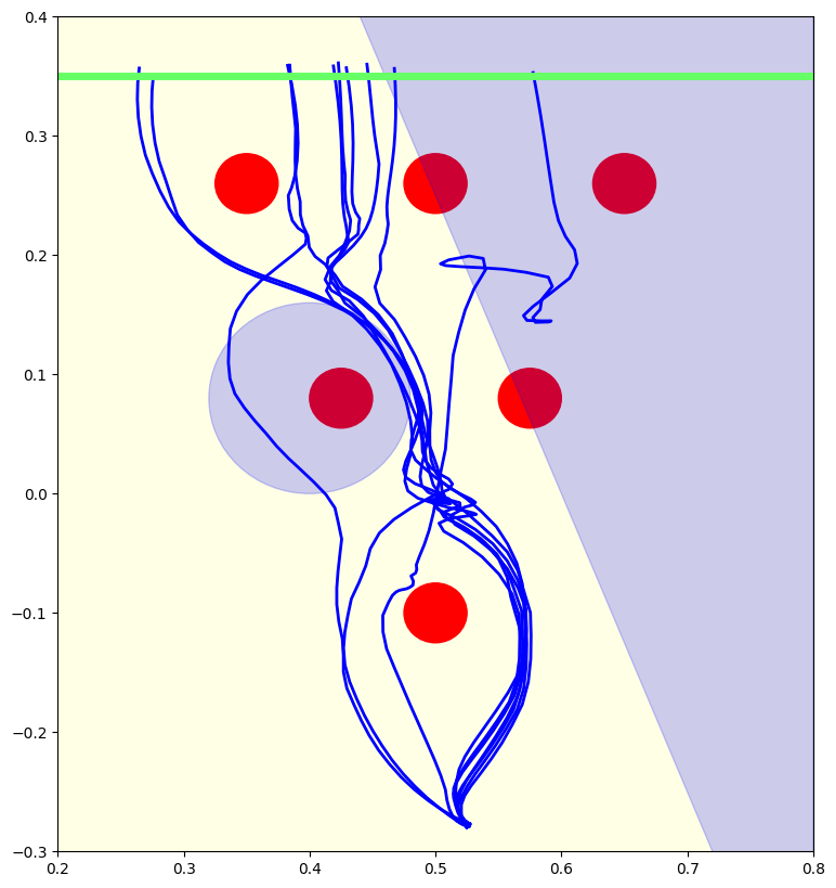}
        \caption{SafeDiffuser \cite{xiao2023safediffuser}}
        \label{fig:cluttered:safediffuser}
    \end{subfigure}
    \hfill
    \begin{subfigure}[b]{0.25\textwidth}
        \centering
        \includegraphics[width=\linewidth]{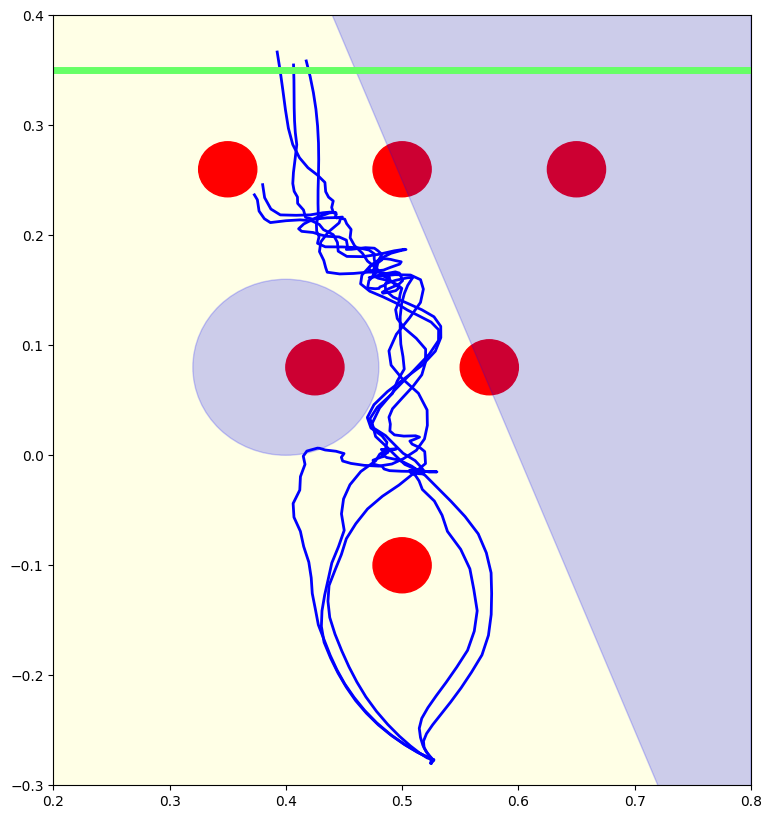}
        \caption{DPCC \cite{romer2024diffusion}}
        \label{fig:cluttered:dpcc}
    \end{subfigure}
    \hfill
    \caption{
    \textbf{Comparison of trajectory smoothness and safety.}
    We compare sampled trajectories from different methods on the \texttt{Avoiding-Left} task. 
    Our method achieves the best balance between data fidelity and constraint satisfaction, resulting in smoother trajectories that avoid collisions.
    }
    \label{fig:d3il_left_hard_smooth}
\end{figure}

\textbf{Mismatched Training Distribution.} 
A subtler failure is shown in \Cref{fig:d3il_diffusion_comparison}, where dynamics-aware projections force early denoising steps to adhere to realistic trajectories. 
However, forward diffusion degrades trajectory structure, and hence noisy samples \textit{should be} dynamically infeasible.
Enforcing dynamics constraints on these corrupted inputs forces the denoiser to operate on unrealistic, out-of-distribution samples. 
This mismatch leads to degraded denoising quality and non-smooth outputs, as visualized in \Cref{fig:d3il_left_hard_smooth}. 
In contrast, JM2D maintains the generative structure across the entire diffusion trajectory, leading to smoother, higher-fidelity results.

\textbf{System Performance.} 
Finally, we argue that although projection-based diffusion motion planners enforce hard constraint satisfaction at each horizon, their tendency to produce low-fidelity trajectories leads to degraded downstream behavior. 
In contrast, while our method does not guarantee per-sample constraint satisfaction, it reduces overall constraint violations by naturally steering the generative process toward safe and plausible solutions. 
This results in improved task performance, as seen in \Cref{tab:main_table}, where our method consistently achieves higher success rates and fewer constraint violations compared to projection-based baselines.

\section{Practical Usage of JM2D}
\label{app:discussion}
We now discuss how JM2D can be adapted to specific user problems of interest. While JM2D is designed for the general structure of such problems, once the user accepts the prioritized assumptions, further refinements are possible as outlined below.

\textbf{Differentiable interaction potential.}
When the interaction potential arises from a differentiable optimization \cite{amos2017optnet, pineda2022theseus}, we can use pathwise gradient estimators \cite{kingma2013auto, ye2024tfg} for lower-variance score estimation.

\textbf{Constraint enforcement strategy.}
If a projection operator onto the feasible set is available, one can project noisy samples back onto the constraint manifold at each diffusion step \cite{christopher2025constrained, romer2024diffusion, bouvier2025ddat}.
We compare these gradient-based and projection-based strategies to our proposed Monte Carlo estimator and show that while privileged methods enforce constraints more aggressively, they tend to excessively perturb the learned distribution compared to our approach empirically (as shown in~\Cref{app:projection_bad}).

\textbf{Design principle of the joint forward noise schedule.} As shown via our formulation, JM2D can use different denoising schedules for model-free and model-based parameter. Our current formulation leverages \texttt{cosine} schedule for the model-free parameter. This aligns with several existing benchmarks of implementing diffusion policies. For the model-based parameter, prior works like MBD~\cite{pan2024model} and DIAL-MPC~\cite{xue2024full} have used \texttt{linear} schedules. JM2D can incorporate the best of both model-free and model-based diffusion sampling.

\textbf{Computational efficiency.} JM2D is based on two main principles:
\begin{enumerate}
    \item Increasing alignment between a predicted clean sample and the interaction potential via complete denoising at every reverse diffusion timestep leads to improved guidance.
    \item Complete denoising at every timestep following a stochastic sampling process like DDPM~\cite{ho2020denoising} leads wide coverage of the training distribution that scales linearly with the number of sampled candidates. Specifically, it relies on the capability of the base diffusion model is capturing multiple principal modes of the training distribution.
\end{enumerate}
Practical application of JM2D is computationally in-efficient. Instead of performing $I$ denoising steps or $I$ evaluations of the score function in standard diffusion sampling, JM2D requires $I(I+1)/2$ evaluations of the score function. Further batch size of JM2D sampling scales with the choice of $N$ and $N_K$ which are critical design parameters. Increasing $N$ and $N_K$ improves performance but increases batch size with $N \times N_K$. As future work, we consider adapting better diffusion samplers with JM2D, particularly:
\begin{enumerate}
    \item DPM /DPM++ solvers: \url{https://github.com/LuChengTHU/dpm-solver}
    \item DEIS: \url{https://github.com/qsh-zh/deis}
\end{enumerate}
This will give us computational flexibility to consider higher values of $N$ and $N_K$.


	
\clearpage

\end{document}